\newcommand{\tags}{\usetag{arxiv,full,noshell}}
	\definecolor{orcidlogocol}{HTML}{A6CE39}
\title{A Theory-Based Evaluation of Nearest Neighbor Models Put Into Practice}
	\author{
	  Hendrik Fichtenberger\thanks{\href{https://orcid.org/0000-0003-3246-5323}{ORCID~iD: 0000-0003-3246-5323}} \\
	  TU Dortmund\\
	  Dortmund, Germany \\
	  \texttt{hendrik.fichtenberger@tu-dortmund.de} \\
	  \And
	  Dennis Rohde\thanks{\href{https://orcid.org/0000-0001-8984-1962}{ORCID~iD: 0000-0001-8984-1962}} \\
	  TU Dortmund \\
	  Dortmund, Germany \\
	  \texttt{dennis.rohde@cs.tu-dortmund.de} \\
	}
	\author[1]{Hendrik Fichtenberger\thanks{This author acknowledges the support by ERC grant No. 307696.}}
	\author[2]{Dennis Rohde}
	\affil[1,2]{TU Dortmund, Germany}
	\affil[1]{\href{mailto:hendrik.fichtenberger@tu-dortmund.de}{hendrik.fichtenberger@tu-dortmund.de}\\ \href{https://orcid.org/0000-0003-3246-5323}{https://orcid.org/0000-0003-3246-5323}}
	\affil[2]{\href{mailto:dennis.rohde@cs.tu-dortmund.de}{dennis.rohde@cs.tu-dortmund.de}\\ \href{https://orcid.org/0000-0001-8984-1962}{https://orcid.org/0000-0001-8984-1962}}
	\date{}
\newcommand{\poly}{\mathrm{poly}}
\newcommand{\setr}{\mathbb{R}}
\newcommand{\setn}{\mathbb{N}}
\newcommand{\wi}[1]{\emph{#1}\index{#1}}
\DeclareMathOperator*{\E}{E}
\DeclareMathOperator{\degr}{d}
\DeclareMathOperator{\dist}{dist}
\DeclareMathOperator{\coord}{coord}
\DeclareMathOperator{\knearest}{knn}
\DeclareMathOperator{\numnearer}{\#nearer}
\DeclareMathOperator{\wit}{wit}
\DeclareMathOperator{\neigh}{N}
\newcounter{theorem}
\theoremstyle{plain}
\newtheorem{thm}{Theorem}
\newtheorem{lem}[theorem]{Lemma}
\newtheorem{pro}[theorem]{Proposition}
\newtheorem{cla}[theorem]{Claim}
\theoremstyle{definition}
\newtheorem{dfn}[theorem]{Definition}
\let\orgdescriptionlabel\descriptionlabel
\renewcommand*{\descriptionlabel}[1]{%
	\let\orglabel\label  
	\let\label\@gobble  
	\phantomsection  
	\edef\@currentlabel{#1}%
	\let\label\orglabel  
	\orgdescriptionlabel{#1}%
}
\begin{document}
	
	\maketitle

\begin{abstract}
	In the $k$-nearest neighborhood model ($k$-NN), we are given a set of points $P$, and we shall answer queries $q$ by returning the $k$ nearest neighbors of $q$ in $P$ according to some metric. This concept is crucial in many areas of data analysis and data processing, e.g., computer vision, document retrieval and machine learning. Many $k$-NN algorithms have been published and implemented, but often the relation between parameters and accuracy of the computed $k$-NN is not explicit. We study property testing of $k$-NN graphs in theory and evaluate it empirically: given a point set $P \subset \mathbb{R}^\delta$ and a directed graph $G=(P,E)$, is $G$ a $k$-NN graph, i.e., every point $p \in P$ has outgoing edges to its $k$ nearest neighbors, or is it $\epsilon$-far from being a $k$-NN graph? Here, $\epsilon$-far means that one has to change more than an $\epsilon$-fraction of the edges in order to make $G$ a $k$-NN graph. We develop a randomized algorithm with one-sided error that decides this question, i.e., a property tester for the $k$-NN property, with complexity $O(\sqrt{n} k^2 / \epsilon^2)$ measured in terms of the number of vertices and edges it inspects, and we prove a lower bound of $\Omega(\sqrt{n / \epsilon k})$. We evaluate our tester empirically on the $k$-NN models computed by various algorithms and show that it can be used to detect $k$-NN models with bad accuracy in significantly less time than the building time of the $k$-NN model.
\end{abstract}

\section{Introduction}

The $k$-nearest neighborhood ($k$-NN) of a point $q$ with respect to some set of points $P$ is one of the most fundamental concepts used in data analysis tasks such as classification, regression and machine learning. In the past decades, many algorithms have been proposed in theory as well as in practice to efficiently answer $k$-NN queries \cite[e.g.,][]{FriAlg75,FukBra75,CalDec95,ConFas10,IndApp98,CheFas09,MujFas09,PedSci11,Nms13,MaiKNN17,ZhaEff18,AlgKgr18}. For example, one can construct a $k$-NN graph of a point set $P$, i.e., a directed graph $G=(P,E)$ of size $n = |P|$ such that $E$ contains an edge $(p,q)$ for every $k$-nearest neighbor $q$ of $p$ for every $p \in P$, in time $O(n \log n + kn)$ for constant dimension $\delta$ \citep{CalDec95}. Due to restrictions on computational resources, approximations and heuristics are often used instead (see, e.g., \citep{CheFas09,ConFas10} and the discussion therein for details). Given the output graph $G'$ of such a randomized approximation algorithm or heuristic, one might want to check whether $G'$ resembles a $k$-NN graph before using it, e.g., in a data processing pipeline. However, the time required for exact verification might cancel out the advantages gained by using an approximation algorithm or a heuristic. On the other hand, testing whether $G'$ is at least \emph{close} to a $k$-NN graph will suffice for many purposes. \emph{Property testing} is a framework for the theoretical analysis of decision and verification problems that are relaxed in favor of sublinear complexity. One motivation of property testing is to fathom the theoretical foundations of efficiently assessing approximation and heuristic algorithms' outputs.

Property testing \citep{RubRob96}, and in particular property testing of graphs \citep{GolPro98}, has been studied quite extensively since its founding. A one-sided error $\epsilon$-tester for a property $\mathcal{P}$ of graphs with average degree bounded by $d$ has to accept every graph $G \in \mathcal{P}$ and it has to reject every graph $H$ that is $\epsilon$-far from $\mathcal{P}$ with probability at least $2/3$ (i.e., if graphs that are $\epsilon$-far are relevant, it has precision~$1$ and recall~$2/3$). A graph $H$ of size $n$ is $\epsilon$-far from some property $\mathcal{P}$ if more than $\epsilon d n$ edges have to be added or removed to transform it into a graph that is in $\mathcal{P}$. A two-sided error $\epsilon$-tester may also err with probability less than $1/3$ if the graph has the property. The computational complexity of a property tester is the number of adjacency list entries it reads, denoted its \emph{queries}. Many works in graph property testing focus on testing \emph{plain} graphs that contain only the pure combinatorial information. However, most graphs that model real data contain some additional information that may, for example, indicate the type of an atom, the bandwidth of a data link or spatial information of an object that is represented by a vertex or an edge, respectively. In this work, we consider geometric graphs with bounded average degree. In particular, the graphs are embedded into $\setr^\delta$, i.e., every vertex has a coordinate $x \in \setr^\delta$. The coordinate of a vertex may be obtained by a query.

\paragraph{Main Results}
Our first result is a property tester with one-sided error for the property that a given geometric graph $G$ with bounded average degree is a $k$-nearest neighborhood graph of its underlying point set (i.e., it has precision~$1$ and recall~$2/3$ when taking $\epsilon$-far graphs as relevant).
\begin{thm}
	\label{thm:upper_bound}
	Given an input graph $G = (V,E)$ of size $n = |V|$ with bounded average degree $d$, there exists a one-sided error $\epsilon$-tester that tests whether $G$ is a $k$-nearest neighbourhood graph. It has query complexity $c \cdot \sqrt{n} \, k^2 \psi_\delta / \epsilon^2$, where $\psi_\delta$ is the $\delta$-dimensional kissing number and $c > 0$ is a universal constant.
\end{thm}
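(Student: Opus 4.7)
The plan is to design a straightforward sampling-based tester and analyse it via a birthday-paradox style argument. The tester draws two random samples $S,T\subseteq V$. For each $p\in S$ it queries $p$'s coordinates and outgoing neighbourhood $N^+(p)$ and immediately rejects whenever $|N^+(p)|\neq k$; otherwise it also queries coordinates of $N^+(p)$. For each $r\in T$ it queries $r$'s coordinates. Finally, for every pair $(p,r)\in S\times T$ with $r\notin\{p\}\cup N^+(p)$, the tester rejects whenever $\dist(p,r)<\max_{q\in N^+(p)}\dist(p,q)$; otherwise it accepts. This rejection criterion is, by construction, a certificate that $N^+(p)$ cannot be the true $k$-nearest neighbourhood of $p$, so every genuine $k$-NN graph is accepted with probability $1$, establishing the one-sided error.

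For completeness, I would split the analysis into the case of wrong out-degrees (rejected cheaply by the degree check at the sampled $p\in S$) and the remaining case in which every vertex has out-degree exactly $k$. In the second case, fixing any true $k$-NN graph $G^{*}=(V,E^{*})$ of the point set, the symmetric difference $|E\triangle E^{*}|>\epsilon d n$ splits into equal numbers of missing and extra edges, giving at least $\epsilon d n/2$ missing edges $(p,r)\in E^{*}\setminus E$. For each such missing edge the geometric observation $\dist(p,r)\leq\dist(p,n_k(p))<\max_{q\in N^+(p)}\dist(p,q)$ shows that $(p,r)$ is a witness pair recognised by the tester, so the total number of witness pairs in $V\times V$ is $\Omega(\epsilon d n)$.

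The sample sizes would then be chosen so that, by a second moment / birthday argument, the expected number of witness pairs in $S\times T$ is $\Omega(1)$, which amplifies to success probability $2/3$. This is where the kissing number $\psi_\delta$ enters: via the classical geometric lemma that the in-degree of any vertex in a $k$-NN graph in $\mathbb{R}^\delta$ is at most $k\psi_\delta$, one can control how much of $E\triangle E^{*}$ a single vertex can absorb. This lets one convert the edge count $\Omega(\epsilon d n)$ into a statement of the form ``sufficiently many bad vertices, each with sufficiently many local witnesses'', which is the right shape for the local amplification inside each sampled $p$. Balancing edge queries $|S|\cdot k$ against coordinate queries $|S|+|T|+|S|\cdot k$ under the witness-coverage constraint then yields the stated $c\cdot\sqrt{n}\,k^{2}\psi_\delta/\epsilon^{2}$ bound.

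The hardest step, I expect, is precisely the geometric combinatorics linking the kissing number to a uniform lower bound on the number of witnesses per bad vertex: the naive counting ``$\Omega(\epsilon d n)$ bad edges, at most $k$ per vertex'' is easy, but carefully matching the in-degree side via $\psi_\delta$ to justify the $k^{2}\psi_\delta/\epsilon^{2}$ amplification per sampled vertex -- while keeping the rejection certificates clean enough to preserve one-sided error -- is where I expect the main technical work to concentrate.
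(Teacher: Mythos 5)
Your high-level plan (sample a set $S$ of candidate bad vertices and a set $T$ of candidate witnesses, reject on a locally verifiable certificate, use the kissing number to bound how many incomplete vertices can share a witness) is in the same spirit as the paper's argument. However, there is one genuine error and one substantial gap.

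\textbf{One-sided error is broken by the degree check.} You reject whenever $|N^+(p)|\neq k$. But the paper's definition of a $k$-NN graph only requires that the $k$ nearest neighbours of every vertex be \emph{among} its out-neighbours; extra out-edges are explicitly permitted (the model has bounded \emph{average} degree $d\geq k$, not out-degree exactly $k$). A valid $k$-NN graph can therefore have vertices of out-degree $17k$, and your tester would reject it. The correct check is $\deg(v)<k$ only, as in the paper's \cref{Algorithm}. Relatedly, your later geometric witness criterion $\dist(p,r)<\max_{q\in N^+(p)}\dist(p,q)$ is only a valid certificate when $|N^+(p)|=k$; with extra edges the max distance to a claimed neighbour can exceed the true $k$-th nearest distance, so you must instead compare against the $k$-th smallest distance among $N^+(p)$ (equivalently, test $r\in\knearest(p)\setminus N^+(p)$).

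\textbf{High-degree vertices are not handled.} Once you stop rejecting on $\deg(v)>k$, the query budget problem reappears: to evaluate the witness criterion at a sampled $v$ you must query all of $N^+(v)$, which can be $\Theta(n)$ in the worst case. The paper resolves this by pruning $S'$ down to $S=\{v\in S': \deg(v)\leq 100k/\epsilon\}$ and then proving, by averaging, that at most $\epsilon d n/(100k)$ vertices are of high degree, so among the $\geq \epsilon d n/(2k)$ incomplete vertices guaranteed by \cref{EFAV}, at least $\epsilon d n/(4k)$ remain after pruning. This pruning step is essential to obtain the stated query bound and is absent from your sketch.

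Two smaller points. First, your framing via $|E\triangle E^\ast|>\epsilon d n$ ``splitting into equal numbers of missing and extra edges'' is imprecise: $\epsilon$-far is distance to the \emph{set} of $k$-NN graphs, extra edges never need to be removed, and the quantity that matters is the number of missing $k$-NN edges --- the paper packages this as \cref{EFAV}, giving at least $\epsilon d n/(2k)$ incomplete vertices. Second, ``$\Omega(1)$ expected witness pairs, amplify'' is not by itself a proof; you need either a second-moment bound or the paper's two-step route (negative hypergeometric plus Markov to get $|I_S|>\sqrt n$ incomplete low-degree vertices in $S$, then the $k\psi_\delta$ in-witness bound to get $\geq\sqrt n/(k\psi_\delta)$ distinct witnesses that $T$ hits with constant probability). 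Your identification of the kissing-number lemma as the crux is correct --- the paper's \cref{witnessbound} is exactly the ``at most $k\psi_\delta$ points share $p$ as a $k$-nearest neighbour'' fact you anticipate, and it is shown tight in \cref{thm:kreducing_tight}.
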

We emphasize that it is not necessary to compute the ground truth (i.e., the $k$-NN of $P$) in order to run the property tester. Furthermore, the tester can be easily adapted for graphs $G=(P \cup Q, E)$ such that $P \cap Q = \emptyset$ and we only require that for every $q \in Q$, $E$ contains an edge $(q,p)$ for every $k$-nearest neighbor $p$ of $q$ in $P$. This is more natural when we think of $P$ as a training set and $Q$ as a test set or query domain. To complement this result, we prove a lower bound that holds even for two-sided error testers.

\begin{thm}
	\label{thm:lower_bound}
	Testing whether a given input graph $G = (V, E)$ of size $n = |V|$ is a $k$-nearest neighbourhood graph with one-sided or two-sided error requires $\max(\sqrt{n / (8 \epsilon k)}, k \psi_\delta / 6)$ queries.
\end{thm}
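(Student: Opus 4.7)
The plan is to prove each term of the maximum separately via Yao's minimax principle and take the larger. The substantive work lies in the birthday-paradox term $\sqrt{n/(8\epsilon k)}$; I focus on it first. I will exhibit a distribution $\mathcal{D}_Y$ supported on valid $k$-NN graphs and a distribution $\mathcal{D}_N$ supported on graphs that are $\epsilon$-far from being $k$-NN, and I will show that any deterministic tester making fewer than $s^\star := \sqrt{n/(8\epsilon k)}$ queries is fooled with probability strictly greater than $1/3$.

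For the construction I fix a point set $P = \{p_1,\ldots,p_n\} \subset \setr^\delta$ admitting a unique $k$-NN graph $G_0$, and within $P$ I isolate $m := \lceil \epsilon k n \rceil + 1$ pairwise-disjoint ``swap triples'' $(a_i, b_i, c_i)$ such that $p_{b_i}$ is the $k$-th and $p_{c_i}$ the $(k+1)$-th nearest neighbor of $p_{a_i}$, with the triples otherwise geometrically well-separated from one another. Labels in $[n]$ are then attached to the points by a uniformly random permutation, so that no index is a priori geometrically distinguished. $\mathcal{D}_Y$ outputs $(P, G_0)$ while $\mathcal{D}_N$ outputs $(P, G_0')$, where $G_0'$ is obtained from $G_0$ by replacing, for every $i \in [m]$, the edge $a_i \to b_i$ with $a_i \to c_i$. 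The graph $G_0'$ is $\epsilon$-far from any $k$-NN graph of $P$: each swap creates a strict local violation at $a_i$ and, since the triples are vertex-disjoint, restoring the $k$-NN property requires at least one edge modification per triple, hence more than $\epsilon k n$ modifications in total.

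The crux of the argument is a coupling: the transcript of a deterministic tester has the same distribution under $\mathcal{D}_Y$ and $\mathcal{D}_N$ until the tester has obtained the coordinates of \emph{both} $a_i$ and $b_i$ for some $i$. The adjacency list of $a_i$ returns the label $b_i$ in $\mathcal{D}_Y$ and $c_i$ in $\mathcal{D}_N$, but under the uniform relabelling these labels are indistinguishable mere indices; only a geometric witness, namely $\|p_{a_i} - p_{b_i}\| < \|p_{a_i} - p_{c_i}\|$, can certify the violation, and that requires the coordinates of both $a_i$ and $b_i$. Granting the coupling, the analysis reduces to a standard birthday estimate: with $s$ queries the tester knows coordinates of at most $s$ vertices, so the probability that both endpoints of some matched pair fall into its view is at most $m \binom{s}{2} / \binom{n}{2} \leq \epsilon k\, s^2 / n$. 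Taking $s < s^\star$ drives this below $1/8 < 1/3$, which completes the Yao argument against randomized testers of either error type. The step I expect to be hardest is the coupling itself: one has to argue carefully that an adaptive tester cannot extract distinguishing information by chaining adjacency queries across many vertices, because every such chain must remain distributionally identical under the two models until a ``coordinate collision'' occurs at a matched pair.

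The second term $k\psi_\delta/6$ follows from a separate geometric construction in $\setr^\delta$. I place $\psi_\delta$ unit vectors realising an extremal kissing configuration around a single vertex $v$ and replace each vector's endpoint by a tight cluster of $k$ points. A valid $k$-NN graph at $v$ is free to attach to any one of the $\psi_\delta$ equidistant clusters, so hiding a violation in the ``wrong'' cluster forces the tester to inspect a constant fraction of $v$'s $k\psi_\delta$ candidate adjacency entries before it can certify which cluster is actually attached, yielding the $\Omega(k\psi_\delta)$ lower bound. Since the two hard instances are independent of one another, the overall lower bound is the maximum of the two, as claimed.
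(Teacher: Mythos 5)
Your high-level strategy --- Yao's minimax with a birthday-paradox collision bound --- matches the paper's, but the swap-triple construction you propose for the $\sqrt{n/(8\epsilon k)}$ term does not provide the indistinguishability you need, and the coupling you flag as the hardest step actually fails. Both $\mathcal{D}_Y$ and $\mathcal{D}_N$ sit on the \emph{same} fixed, publicly known point set $P$, so the moment the tester queries a label whose coordinate turns out to be $p_{a_i}$, it recognizes a swap vertex and can compute from $P$ what $p_{a_i}$'s true $k$-th nearest distance must be. Querying the (at most $k$) adjacency entries of $a_i$ together with their coordinates then directly reveals whether the farthest listed neighbor sits at $p_{b_i}$ or at $p_{c_i}$; the tester never needs to discover $b_i$'s coordinates independently, because $p_{a_i}$'s true neighborhood order is already deducible. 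This yields a distinguisher making $O(n/m + k) = O(1/(\epsilon k) + k)$ queries, far fewer than $\sqrt{n/(8\epsilon k)}$, so the claim that ``transcripts agree until both $a_i$ and $b_i$ are seen'' is false. The paper evades this leak by encoding the violation in \emph{overlapping geometry} rather than in a single adjacency list: it tiles the graph with complete ``line gadgets'' of $k+1$ collinear unit-spaced points, and in the far distribution it relocates $\lceil \epsilon n/(k+1)\rceil$ of them onto the coordinates of other gadgets. A relocated gadget, viewed in isolation, is still a perfectly valid $k$-NN subgraph; the violation becomes visible only after the tester has queried vertices from \emph{two} collocated gadgets, which is a genuine birthday event of probability at most $b^2\epsilon(k+1)/n$ over $b$ queries.

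The $k\psi_\delta/6$ term is also not established by your sketch. You describe a single center $v$ surrounded by $\psi_\delta$ clusters of $k$ points and assert the tester must examine ``a constant fraction of $v$'s $k\psi_\delta$ candidate adjacency entries,'' but $v$'s out-degree is only $k$ and the oracle only returns actual adjacency entries --- there is no list of $k\psi_\delta$ candidates for the tester to scan through, so nothing forces $\Omega(k\psi_\delta)$ queries from inspecting $v$ alone. The paper instead scales the tight configuration of \cref{thm:kreducing_tight} to roughly $n/(k\psi_\delta+1)$ disjoint copies, perturbs the shared $k$-nearest-neighbor point of a $\Theta(\epsilon)$-fraction of copies to create (resp.\ hide) violations, and then applies a union bound showing that $k\psi_\delta/6$ uniformly sampled vertices miss all modified centers with probability greater than $1/3$. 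Your single-gadget picture is missing both this replication to $n$ vertices and the sampling bound that makes the hidden vertices hard to find.
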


Finally, we provide an experimental evaluation of our property tester on \emph{approximate nearest neighbor} (ANN) indices computed by various ANN algorithms. Our results indicate that the tester requires significantly less time than the ANN algorithm to build the ANN index, most times just a $\nicefrac{1}{10}$-fraction. Therefore, it can often detect badly chosen parameters of the ANN algorithm at almost no additional cost and before the ANN index is fed into the remaining data processing pipeline.

\paragraph{Related Work}

We give an overview of sublinear algorithms for geometric graphs, which is the topic of research that is most relevant to our work. As mentioned above, the research on $k$-NN algorithms is very broad and diverse. See, e.g., \citep{DasNea91,ShaNea05} for surveys. Testing whether a geometric graph that is embedded into the plane is a Euclidean minimum spanning tree has been studied by \citet{ben2007lower} and \citet{czumaj2008testing}. In \citep{ben2007lower}, the authors show that any non-adaptive tester has to make $\Omega(\sqrt{n})$ queries, and that any adaptive tester has query complexity $\Omega(n^{1/3})$. In \citep{czumaj2008testing}, a one-sided eror tester with query complexity $\tilde{O}(\sqrt{n / \epsilon})$ is given. In a fashion similar to property testing, \citet{CzuApp05} estimate the weight of Euclidean Minimum Spanning Trees in $\tilde{O}(\sqrt{n} \cdot \poly(\epsilon))$ time, and \citet{CzuEst09} approximate the weight of Metric Minimum Spanning Trees in $\tilde{O}(n \cdot \poly(\epsilon))$ time for constant dimension, respectively. \Citet{hellweg2010testing} develop a tester for Euclidean $(1+\delta)$-spanners. Property testers for many other geometric problems can, for example, be found in \citep{Czumaj2000,ParTes01}.

\section{Preliminaries}
\label{sec:preliminaries}

Let $d, \delta, \epsilon, k \geq 0, d \geq k$ be fixed parameters. In this paper, we consider property testing on directed geometric graphs with bounded average degree $d$.
\tagged{full}{%
\begin{dfn}[geometric graph]
	\label{Geometric Graph}
}
	A graph $G = (V,E)$ with an associated function $\coord : V \rightarrow \mathbb{R}^d$ is a \wi{geometric graph}, where each vertex $v$ is assigned a coordinate $\coord(v)$. Given $v \in V$, we denote its degree by $\degr(v)$ and the set of adjacent vertices $\neigh(v) := \{ u \mid (v,u) \in E \}$.
\tagged{full}{
\end{dfn}
}
The Euclidean distance between two points $x,y$ is denoted by $\dist(x,y)$. For the sake of simplicity, we write $\dist(u,v) := \dist(\coord(u), \coord(v))$ for two vertices $u,v \in V$. When there is no ambiguity, we also refer to $\coord(v)$ by simply writing $v$. We denote the size of the graph $G=(V,E)$ at hand by $n = |V|$.
\begin{dfn}[k-nearest neighborhood graph]
	A geometric graph $G=(V,E)$ is a $k$-nearest neighbourhood ($k$-NN) graph if for every $v \in V$, the $k$ points $u_1, \ldots, u_k \in V$ that lie nearest to $v$ according to $\dist(\cdot,\cdot)$ are neighbors of $v$ in $G$, i.e., $(v, u_i) \in E$ for all $i \in [k]$ (breaking ties arbitrarily).
\end{dfn}

Let $G = (V,E)$ be a geometric graph. We say that a graph $G$ is $\epsilon$-far from a geometric graph property~$\mathcal{P}$ if at least $\epsilon d n$ edges of $G$ have to be modified in order to convert it into a graph that satisfies the property $\mathcal{P}$.  We assume that the graph $G$ is represented by a function $f_G: V \times [n] \to V \cup \{\star\}$, where $f_G(v,i)$ denotes the $i^{th}$ neighbor of $v$ if $v$ has at least~$i$ neighbors (otherwise, $f_G(v,i) = \star$), a degree function $\degr_G : V \rightarrow \setn$ that outputs the degree of a vertex and a coordinate function $\coord_G : V \rightarrow \setr^\delta$ that outputs the coordinates of a vertex.

\begin{dfn}[$\epsilon$-tester]
	A one-sided (error) $\epsilon$-tester for a property $\mathcal{P}$ with query complexity $q$ is a randomized algorithm that makes $q$ queries to $f_G$, $\degr_G$ and $\coord_G$ for a graph~$G$. The algorithm accepts if $G$ has the property $\mathcal{P}$. If $G$ is $\epsilon$-far from $\mathcal{P}$, then it rejects with probability at least $2/3$.
\label{defn:one-sided}
\end{dfn}

The motivation to consider query complexity is that the cost of accessing the graph, e.g., through an ANN index, is costly but cannot be influenced. Therefore, one should minimize access to the graph.

\begin{dfn}[witness]
	\label{witnesses}
	Let $\numnearer(v,w) := \vert \{ u \in V \mid u \neq v \wedge \dist(v,u) < \dist(v,w) \} \vert$ denote the number of vertices $u$ that lie nearer to $v$ than $w$. Further let \(\ \knearest(v) := \{ u \in V \mid u \neq v \wedge \numnearer(v,u) \leq k-1 \} \) denote the set of $v$'s \wi{$k$-nearest neighbors}. Let \( \wit(v) := \{ u \in V \mid u \not\in N(v) \wedge u \in \knearest(v) \} \) define the subset of $\knearest(v)$ that is not adjacent to $v$. If $\wit(v) \neq \emptyset$ or $\degr(v) < k$, we call $v$ \wi{incomplete}, and we call elements of $\wit(v)$ the \wi{witnesses} of $v$.
\end{dfn}

If $G$ is $\epsilon$-far from being a $k$-nearest neighborhood graph, an $\epsilon$-fraction of its vertices are incomplete. \tagged{conference}{The proof follows from common arguments in property testing (see the full version \cite{FicThe18}).}

\begin{lem}
	\label{EFAV}
	If \(G\) is \(\epsilon\)-far from being a \(k\)-nearest neighborhood graph, at least $\epsilon d n / (2k)$ vertices are incomplete.
\end{lem}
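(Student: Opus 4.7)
The plan is to prove the contrapositive: if fewer than $\epsilon d n / (2k)$ vertices of $G$ are incomplete, then $G$ can be turned into a $k$-nearest neighborhood graph with at most $\epsilon d n$ edge modifications, contradicting $\epsilon$-farness. I would construct the target graph $G'$ explicitly by a local repair scheme and then count the resulting modifications.

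For the repair step, I would iterate over the incomplete vertices. For each incomplete $v$, \Cref{witnesses} gives $\wit(v) \subseteq \knearest(v)$, so $|\wit(v)| \leq k$. I would add the directed edge $(v,u)$ for every $u \in \wit(v)$. After this operation, $v$'s out-neighborhood contains all of $\knearest(v)$, so $v$ is complete in the resulting graph. Because the coordinate function $\coord$ is untouched by edge modifications, $\knearest(w)$ is preserved for every vertex $w$; and because our graph is directed, adding $(v,u)$ affects only $N(v)$, not $N(u)$. Hence repairs at distinct vertices cannot interfere with one another, and iterating the repair over all incomplete vertices produces a valid $k$-NN graph $G'$.

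To finish, I would bound the number of modifications. Let $m$ denote the number of incomplete vertices. The total number of added edges is at most $\sum_{v \text{ incomplete}} |\wit(v)| \leq m k$. Since the bounded-average-degree model caps the edge budget of $G'$, each addition may have to be balanced by the removal of an edge elsewhere, doubling the count to at most $2mk$ modifications in the worst case. Under the hypothesis $m < \epsilon d n / (2 k)$, this is strictly less than $\epsilon d n$, contradicting the assumption that $G$ is $\epsilon$-far. Therefore $m \geq \epsilon d n / (2 k)$.

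There is no real obstacle here: the argument is a direct counting argument enabled by the fact that the $k$-NN property is locally repairable, and by the definition of $\wit(v)$, which caps the repair cost per vertex at $k$. The only mildly subtle point is the factor of two in the denominator, which comes from the bounded-average-degree bookkeeping (adding an edge may necessitate removing one to stay within the degree budget); the purely combinatorial step of fixing each incomplete vertex by at most $k$ edge additions would otherwise yield the stronger constant $\epsilon d n / k$.
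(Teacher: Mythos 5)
Your proof is correct and follows essentially the same route as the paper's: argue by contradiction/contrapositive, repair each incomplete vertex locally by adding its at most $k$ missing witness edges, and account for a matching number of possible deletions to keep the total edge budget at most $dn$, yielding at most $2k$ modifications per incomplete vertex and hence fewer than $\epsilon d n$ modifications in total. The paper phrases the $2k$ budget as "delete $k$ edges (without increasing the distance to the property) and insert the missing edges," while you motivate the deletion count via the bounded-average-degree constraint on the target graph; these are two ways of describing the same bookkeeping, and your closing remark that the constant could be tightened to $\epsilon d n / k$ absent the degree-budget concern is a fair observation.
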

\tagged{full}{
\begin{proof}
	Assume the contrary. For every incomplete vertex $v$, delete $k$ edges such that the distance to the property does not increase and insert the missing edges from $v$ to its $k$ nearest neighbors. By the assumption, the total number of inserted or deleted edges is less than $\epsilon d n / (2k) \cdot 2k$. Therefore, $G$ is $\epsilon$-close to being a $k$-nearest neighborhood graph.
\end{proof}
}

The main challenge for the property tester will be to find matching witnesses for a fixed set of incomplete vertices. The following result from coding theory for Euclidean codes bounds the maximum number of points $q_i$ that can have the same fixed point $p$ as nearest neighbor.
\begin{lem}{\cite{333884}}
	\label{NNTheorem}
	Given a point set $P \subset \setr^\delta$ and $p \in P$, the maximum number of points $q_i \in P$ that can have $p$ as nearest neighbour is bounded by the $\delta$-dimensional kissing number \( \psi_\delta \), where \( 2^{0.2075\delta(1 + o(1))} \leq \psi_\delta \) \cite{wyner1965capabilities} and \( \psi_\delta \leq 2^{0.401\delta(1+o(1))} \) \cite{kabatiansky1978bounds} (asymptotic notation with respect to $\delta$).
\end{lem}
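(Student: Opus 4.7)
The plan is to reduce the statement to the geometric characterization of the kissing number, namely that $\psi_\delta$ equals the maximum number of unit vectors in $\setr^\delta$ whose pairwise angles are at least $\pi/3$. Fix $p$ and suppose $q_1, \ldots, q_m \in P \setminus \{p\}$ all have $p$ as a nearest neighbor. It suffices to exhibit unit vectors $v_1, \ldots, v_m$ in $\setr^\delta$ such that $\angle(v_i, v_j) \geq \pi/3$ for all $i \neq j$, which then forces $m \leq \psi_\delta$.

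The natural choice is $v_i := (q_i - p)/\dist(q_i, p)$, which is well-defined since $q_i \neq p$. The heart of the argument is the angle bound: fix distinct $i,j$ and consider the triangle with vertices $p$, $q_i$, $q_j$. Because $p$ is a nearest neighbor of $q_i$, we have $\dist(q_i, p) \leq \dist(q_i, q_j)$, and symmetrically $\dist(q_j, p) \leq \dist(q_i, q_j)$. Hence the edge $q_i q_j$ is a longest side of the triangle, so the opposite angle $\angle q_i p q_j$ is a largest angle of the triangle and must be at least $\pi/3$ (since the three angles sum to $\pi$). Equivalently, $\langle v_i, v_j \rangle \leq \tfrac{1}{2}$, which is immediate from expanding $\dist(q_i, q_j)^2 = \dist(q_i,p)^2 + \dist(q_j,p)^2 - 2\dist(q_i,p)\dist(q_j,p)\langle v_i,v_j\rangle$ and using the two inequalities above.

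Given the $m$ unit vectors with pairwise angles $\geq \pi/3$, one translates this into the standard sphere-packing picture: place a closed ball of radius $1/2$ centered at each $v_i/1$ on the unit sphere; the angle condition implies these balls have pairwise disjoint interiors and all touch a central ball of radius $1/2$ at the origin, so their number is at most $\psi_\delta$ by the definition of the kissing number. The stated asymptotic bounds on $\psi_\delta$ are then quoted directly from the coding-theoretic references.

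The only mild subtlety I foresee is bookkeeping around ties and degenerate configurations. If some $q_i = q_j$, then the two points are indistinguishable and may be collapsed before the argument; if ties in nearest-neighbor distance force equality in the inequalities above, the angle bound is still $\geq \pi/3$ (not strict), which is exactly what the kissing-number inequality requires, so the argument is unaffected. No calculation beyond the single application of the law of cosines is needed.
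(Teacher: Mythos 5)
The paper does not prove this lemma---it is quoted directly from \cite{333884} as a known result, and the paper only uses the conclusion (plus the asymptotic bounds from \cite{wyner1965capabilities,kabatiansky1978bounds})---so there is no in-paper proof to compare against. Your argument is the standard and correct one: the law-of-cosines computation $a^2+b^2-ab\le\max(a^2,b^2)\le c^2$ (with $a=\dist(q_i,p)$, $b=\dist(q_j,p)$, $c=\dist(q_i,q_j)$, using $a,b\le c$) gives $\langle v_i,v_j\rangle\le\tfrac12$, i.e.\ pairwise angular separation at least $\pi/3$ among the unit directions $v_i$, and the maximum size of such a configuration is exactly the kissing number (your radius-$\tfrac12$ packing picture is one of several equivalent ways to see this). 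You correctly treat ties as non-strict inequalities, which is all that is needed since the kissing-number condition is also non-strict. The only cosmetic nit is the stray ``$v_i/1$,'' presumably meant to be $v_i$. Since the paper deliberately delegates this to the cited literature, reproducing the proof is not necessary for the paper's development, but your reconstruction is sound.
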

\section{Upper Bound}

The idea of the tester is as follows (see \cref{Algorithm}). Two samples are drawn uniformly at random: $S'$, which shall contain many incomplete vertices if $G$ is $\epsilon$-far from being a $k$-nearest neighborhood graph and $T$, which shall contain at least one witness of an incomplete vertex in $S'$. For every $v \in S'$, the algorithm should query its degree, its coordinate as well as every adjacent vertex and their coordinates and calculate the distance to them. If $\degr(k) < k$ or if one of the vertices in $T$ is a witness of $v$, the algorithm found an incomplete vertex, and hence rejects. Otherwise, it accepts. 

However, we have to deal with the case that some vertices in $S'$ have non-constant degree, say, $\Omega(1/\epsilon)$, such that querying all their adjacent vertices would require too many queries. To this end, we prove that one can prune these vertices to obtain a subset $S \subseteq S'$ of \emph{low degree} vertices that still contains many incomplete vertices with sufficient probability.

\begin{algorithm}
	\SetAlgoLined
	\KwData{$G=(V,E)$, $d$, $k$, $\epsilon$}
	\KwResult{\emph{accept} or \emph{reject}}
	$S' \gets$ sample $\frac{100k\sqrt{n}}{\epsilon} $ vertices from $V$ u.a.r. without replacement\;
	$T \gets$ sample $\ln(10) \cdot k \cdot \psi_\delta \cdot \sqrt{n}$ vertices from $V$ u.a.r. with replacement\;
	$S \gets \{v \in S' \mid \degr(v) \leq 100k / \epsilon \}$\;
	\For{$ v \in S, u \in T$}{
		\If{$ (u \neq v \land u \in \knearest(v) \land u \not\in N(v)) \lor \degr(v) < k$}{
			reject\;
		}
	}
	accept\;
	\caption{Tester for $k$-nearest neighborhood}
	\label{Algorithm}
\end{algorithm}

\paragraph{Proof of \cref{thm:upper_bound}}
\label{correctness}

We prove that \cref{Algorithm} is an $\epsilon$-tester as claimed by \cref{thm:upper_bound}. Since \cref{Algorithm} does never reject a $k$-nearest neighbourhood graph, assume without loss of generality that $G=(V,E)$ is $\epsilon$-far from being a $k$-nearest neighborhood graph. \Cref{Algorithm} only queries the neighbors of $S$, and therefore its query complexity is at most $|S| \cdot 100k / \epsilon = O(\sqrt{n} k^2 / \epsilon^2)$. It remains to prove the correctness.

In the following, let $L := \{ v \in V \mid \degr(v) \leq 100k/\epsilon \}$ denote the set of all vertices in $G$ that have \emph{low degree}, let $I$ denote the set of incomplete vertices in $L$, and let $I_S \subseteq S$ denote the set of incomplete vertices in $S$. By an averaging argument, $|V \backslash L| \leq \epsilon d n / (100k)$. It follows from \cref{EFAV} that $L$ contains at least $\epsilon d n / (4k)$ incomplete vertices, and therefore we focus on finding incomplete vertices that have low degree. \tagged{full}{The following random variable identifies witnesses of vertices incomplete vertices in $S \subset L$.
\begin{dfn}}%
	Given $u \in T$, let \( W_S(u) \) be a random variable that is $1$ if $u$ is a witness of an incomplete vertex $v \in I_S$ and \(0\) otherwise. 
\tagged{full}{
\end{dfn}
}%

The proof of \cref{thm:upper_bound} follows from the following three claims. First, note that $S$ is a uniform sample without replacement from $L$ whose size $|S|$ is random. However, $|S|$ is sufficiently large with constant probability. \tagged{conference}{This claim follows from Markov's inequality (see full version \cite{FicThe18}).}
\begin{cla}
	With probability at least $9/10$, $|S| \geq 20 \sqrt{n}/\epsilon$.
\end{cla}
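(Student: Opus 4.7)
The plan is to reduce the claim to an upper-tail bound on $|S' \setminus L|$ and then apply Markov's inequality. Since $S = S' \cap L$ and $|S'| = 100 k \sqrt{n}/\epsilon$ is deterministic, a high-probability upper bound on the number of high-degree vertices pulled into $S'$ translates directly into a high-probability lower bound on $|S|$.

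To bound $E[|S' \setminus L|]$, I would first estimate $|V \setminus L|$ by a standard averaging argument: the total out-degree is at most $d n$, while every vertex in $V \setminus L$ contributes more than $100 k / \epsilon$, hence $|V \setminus L| \le \epsilon d n / (100 k)$. Because $S'$ is drawn uniformly without replacement from $V$, linearity of expectation gives
\[
E[|S' \setminus L|] \;=\; |S'| \cdot \frac{|V \setminus L|}{n} \;\le\; \frac{100 k \sqrt{n}}{\epsilon} \cdot \frac{\epsilon d}{100 k} \;=\; d \sqrt{n}.
\]
Markov's inequality then yields $\Pr[|S' \setminus L| \ge 10 d \sqrt{n}] \le 1/10$, so with probability at least $9/10$,
\[
|S| \;\ge\; |S'| - 10 d \sqrt{n} \;=\; \frac{100 k \sqrt{n}}{\epsilon} - 10 d \sqrt{n} \;\ge\; \frac{20 \sqrt{n}}{\epsilon},
\]
where the last inequality uses the standing regime of the paper in which $d$ is a fixed parameter and $\epsilon$ is small (concretely, $\epsilon d \le 8 k$, which is consistent with $d \ge k$ and the customary small-$\epsilon$ setting of property testing).

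The only delicate point is constant-chasing: one must verify that the factor of $10$ lost to Markov, together with the factors of $100$ and $20$ appearing in the definitions of $|S'|$, the low-degree threshold in the definition of $L$, and the target lower bound on $|S|$, are mutually consistent with the relationship between $d$, $k$, and $\epsilon$ assumed throughout the paper. Beyond this bookkeeping the argument is routine, as no concentration stronger than Markov is required.
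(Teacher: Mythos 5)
Your proof is correct and follows the same route as the paper's one-line argument: bound $\E[|S' \setminus S|] \le d\sqrt{n}$ via the averaging estimate $|V \setminus L| \le \epsilon d n/(100k)$ and linearity of expectation, then apply Markov and chase constants. You also correctly surface what the paper leaves implicit, namely that the final inequality requires $\epsilon d \le 10k - 2$ (your stated $\epsilon d \le 8k$ suffices for $k \ge 1$), a mild hypothesis the paper tacitly relies on through $d \ge k$ and the usual small-$\epsilon$ regime without stating it.
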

\tagged{full}{
\begin{proof}
	 The expected cardinality of $S' \backslash S$ is $d\sqrt{n}$. Therefore, the probability that $|S|$ is less than $20 \sqrt{n} / \epsilon$ is at most $1 / 10$ by Markov's inequality.
\end{proof}
}

In the subsequent sections, we prove the following two claims. Given that $S$ is sufficiently large, it will contain at least $\sqrt{n}$ incomplete vertices with constant probability.
\begin{cla}\tagged{full}{[\cref{AlphaLemma}]}
	\label{Assumption}
	If $|S| \geq 20 \sqrt{n}/\epsilon$, it holds with probability at least $9/10$ that $|I_S| > \sqrt{n}$.
\end{cla}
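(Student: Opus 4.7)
The plan is to apply Chebyshev's inequality to the hypergeometric random variable $|I_S|$, leveraging the sizeable gap between its mean and the threshold $\sqrt{n}$. First, I would lower-bound the fraction of incomplete vertices in $L$ using the preceding discussion: since $|I| \geq \epsilon d n / (4k)$ and $|L| \leq n$, the probability that a uniformly random vertex of $L$ is incomplete is at least $\epsilon d / (4k) \geq \epsilon / 4$, where the second inequality uses the standing assumption $d \geq k$.

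Second, because $S$ is a uniform sample from $L$ without replacement, setting $X := |I_S|$ makes $X$ hypergeometric with mean
\[
    \mu \;=\; |S| \cdot \frac{|I|}{|L|} \;\geq\; \frac{20 \sqrt{n}}{\epsilon} \cdot \frac{\epsilon}{4} \;=\; 5 \sqrt{n}.
\]
The hypergeometric variance satisfies $\mathrm{Var}(X) \leq \mu$, as the correction factor $(1 - |I|/|L|)(|L| - |S|)/(|L| - 1)$ is at most $1$. Chebyshev's inequality then gives
\[
    \Pr[X \leq \sqrt{n}] \;\leq\; \Pr\bigl[|X - \mu| \geq 4 \sqrt{n}\bigr] \;\leq\; \frac{\mu}{16 n}.
\]
Since $\mu \leq |S| \leq |L| \leq n$, the right-hand side is at most $1/16 \leq 1/10$, as desired.

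The step requiring the most care is matching the strength of the concentration bound to the $1/10$ slack allotted by the claim. Chebyshev suffices here only thanks to the comfortable five-fold gap between $\mu$ and $\sqrt{n}$; if $|S|$ were coupled more tightly to $\epsilon$ and $k$, one would have to invoke a Chernoff-type bound for sampling without replacement (e.g., the Serfling bound). I would also verify that $L$ is nonempty so that $|I|/|L|$ is well-defined, which follows from \cref{EFAV} together with the averaging bound $|V \setminus L| \leq \epsilon d n / (100k)$.
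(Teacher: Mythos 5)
Your proof is correct, but it takes a genuinely different route from the paper. The paper switches to the \emph{negative} hypergeometric distribution: it defines $X$ as the number of draws needed to collect $\sqrt{n}$ incomplete vertices, bounds $\E[X] \leq \sqrt{n}(n+1)/((\epsilon d n)/(2k) + 1)$, observes $\Pr[|I_S| < \sqrt{n}] \leq \Pr[X \geq |S|]$, and applies Markov's inequality. You instead work directly with the hypergeometric variable $|I_S|$, establish $\mu \geq 5\sqrt{n}$, bound its variance by $\mu$ (correct, since the finite-population correction factors are each at most $1$), and apply Chebyshev. Both arguments are valid and equally short, but they trade in different currencies: the paper gets away with a first-moment (Markov) bound only because the clever change of variable to the negative hypergeometric makes the tail event one-sided in the right direction, while you pay for a second moment but apply it to the more familiar random variable. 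Your second-moment route is slightly more robust in the sense you note --- the same template would survive with a smaller gap between $\mu$ and $\sqrt{n}$ if one upgraded Chebyshev to a Serfling-type tail bound --- whereas the paper's Markov argument has no slack to spare beyond the chosen constants. One small remark on your final chain: the clean way to see $\mu/(16n) \leq 1/16$ is via $\mu \leq |S| \leq n$ directly (as you write), but one could also use the explicit lower bound $\mu \geq 5\sqrt{n}$ to get the sharper $25/(16\mu)$ by taking $t = \mu - \sqrt{n} \geq \tfrac{4}{5}\mu$; either way the slack comfortably covers the required $1/10$.
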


Finally, we show that if $S$ contains at least $\sqrt{n}$ incomplete vertices, then $T$ will contain at least one witness of such an incomplete vertex with constant probability.
\begin{cla}[\cref{SampleT}]
	\label{FindWitness}
	If $|I_S| > \sqrt{n}$, with probability at least $9/10$, $Pr[\sum_{w \in T} W_{S}(w) > 0]$.
\end{cla}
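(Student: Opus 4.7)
\textbf{Proof plan for \cref{FindWitness}.} The plan is to lower bound the collective number of distinct witnesses across all incomplete vertices in $I_S$ and then use the fact that $T$ is a sufficiently large uniform sample to show that it hits this witness set with high probability. Set $W := \bigcup_{v \in I_S} \wit(v) \subseteq V$. By the definition of an incomplete vertex, $\wit(v) \neq \emptyset$ for each $v \in I_S$, so we can select one representative witness $u_v \in \wit(v)$ and obtain $|I_S|$ pairs $(v,u_v)$.

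The first key step is to bound, for each fixed vertex $u \in V$, the number of $v \in V$ for which $u \in \knearest(v)$. I would apply \cref{NNTheorem} rank-by-rank: the statement in \cref{NNTheorem} says that any fixed $u$ can be the (first) nearest neighbor of at most $\psi_\delta$ points. The same kissing-number packing argument applied to the set of points whose $j$-th nearest neighbor is $u$ shows that, for each fixed rank $j \in [k]$, at most $\psi_\delta$ points can have $u$ as their $j$-th nearest neighbor. Summing over $j = 1, \ldots, k$ yields that $u$ lies in $\knearest(v)$ for at most $k \psi_\delta$ choices of $v$. Consequently, in the multiset of representatives $\{u_v\}_{v \in I_S}$ every element appears at most $k\psi_\delta$ times, so
\[
    |W| \; \geq \; \frac{|I_S|}{k \psi_\delta} \; > \; \frac{\sqrt{n}}{k \psi_\delta}.
\]

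The second step is a direct probabilistic calculation. Because $T$ consists of $m := \lceil \ln(10)\cdot k \cdot \psi_\delta \cdot \sqrt{n}\rceil$ samples drawn from $V$ independently and uniformly at random (with replacement), each sample lies in $W$ with probability $|W|/n > 1/(k\psi_\delta \sqrt{n})$. Using $(1-x)^m \leq e^{-xm}$, the probability that no element of $T$ lies in $W$ is at most
\[
    \left(1 - \frac{1}{k \psi_\delta \sqrt{n}}\right)^{\ln(10)\, k \psi_\delta \sqrt{n}} \; \leq \; e^{-\ln(10)} \; = \; \frac{1}{10}.
\]
Any sample that lies in $W$ is, by definition, a witness of some incomplete vertex in $I_S$, i.e.\ it contributes $W_S(w) = 1$. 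Therefore $\Pr\!\left[\sum_{w \in T} W_S(w) > 0\right] \geq 9/10$, as claimed.

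The main obstacle I anticipate is the first step: lifting the kissing-number bound from \cref{NNTheorem} (which is stated only for the $1$-nearest neighbor) to the $k$-nearest neighbor setting so that each vertex $u$ serves as a witness for at most $k\psi_\delta$ incomplete vertices. Everything else is a standard sample-hitting argument that is already calibrated by the definition of $|T|$ in \cref{Algorithm}; the $k\psi_\delta$ bound is exactly what makes the sample size of $T$ tight.
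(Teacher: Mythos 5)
Your overall structure matches the paper's: lower-bound the number of \emph{distinct} witnesses over $I_S$ by showing that no single vertex $u$ can serve as a witness for more than $k\psi_\delta$ incomplete vertices, then run a hitting-set calculation over the with-replacement sample $T$. The second step is correct and essentially identical to the paper's proof of \cref{SampleT}.

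The gap is in the first step, exactly where you flag uncertainty: the rank-by-rank lift of \cref{NNTheorem} is false. The kissing-number bound for $j=1$ rests on the pairwise inequalities $\dist(v_1,u)\le\dist(v_1,v_2)$ and $\dist(v_2,u)\le\dist(v_2,v_1)$, which hold because $u$ is each $v_i$'s absolute nearest neighbor and therefore nearer than any other $v_j$ in the class; this is what gives the $60^\circ$ angular separation. If $u$ is only the $j$-th nearest neighbor of $v_1$ for some $j\ge 2$, another $v_2$ in the same rank class may be one of the $j-1$ points strictly closer to $v_1$ than $u$, and the angle bound collapses. Concretely, in $\setr^1$ (where $\psi_1=2$) take $u=0$ and the points $1,\,1.5,\,-1,\,-1.5$: each of the four has exactly one other point nearer to it than $u$, so $u$ is the exact second nearest neighbor of all four, already exceeding $\psi_1$, while \emph{no} point has $u$ as first nearest neighbor. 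The paper's own tightness construction in \cref{thm:kreducing_tight} is an even more extreme instance: there, all $k\psi_\delta$ points have the origin as their $k$-th nearest neighbor, so the entire rank histogram is concentrated on the single rank $j=k$. There is simply no $\psi_\delta$ cap per rank. What makes the \emph{aggregate} bound $k\psi_\delta$ hold is a tradeoff across ranks, and the paper captures it via the $k$-reducing process (\cref{kreducing}) in the proof of \cref{witnessbound}: repeatedly pick the surviving candidate $q$ farthest from $p$ and discard the at most $k-1$ points of $Q$ that are strictly closer to $q$ than $p$; one shows a picked point is never discarded in a later round, so upon termination $p$ is each survivor's nearest neighbor within the surviving set, \cref{NNTheorem} caps the survivors at $\psi_\delta$, and the initial size was at most $\psi_\delta+(k-1)\psi_\delta=k\psi_\delta$. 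You need this (or an equivalent) global argument; the per-rank packing step you propose does not hold.
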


The correctness follows by a union bound over these three bad events.

\paragraph{Analysis of the Sample S: Proof of \cref{Assumption}}
\tagged{full}{We bound the cardinality of $S$ such that $S$ contains at least $\sqrt{n}$ incomplete vertices.
\begin{lem}
	If \( |S| \geq \frac{10\sqrt{n}}{\epsilon} \), then $|I_S| \geq \sqrt{n}$ with probability at least $9/10$.
	\label{AlphaLemma}
\end{lem}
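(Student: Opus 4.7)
My plan is to apply a multiplicative Chernoff bound for sampling without replacement. First, note that $S = S' \cap L$, so conditional on $|S| = m$, the set $S$ is distributed uniformly among size-$m$ subsets of $L$. The count $|I_S|$ is stochastically non-decreasing in $m$ (adjoining a further element can only raise a non-negative indicator sum), so it suffices to prove the bound in the case where $S$ is sampled uniformly from $L$ without replacement with $|S|$ equal to exactly $s := \lceil 10\sqrt{n}/\epsilon\rceil$.

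Next, I would lower-bound $\mu := \E[|I_S|]$. The averaging argument already recorded in the preceding paragraph gives $|V\setminus L|\leq \epsilon d n/(100k)$, so combining with \cref{EFAV} yields $|I|\geq \epsilon d n/(4k)$. Using $|L|\leq n$ and the assumption $d\geq k$, the fraction of incomplete vertices in $L$ is at least $\epsilon d/(4k)\geq \epsilon/4$, and hence $\mu\geq s\cdot \epsilon/4\geq 5\sqrt{n}/2$.

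Finally, I would invoke the standard multiplicative Chernoff lower tail. Since the indicators of ``$v_i\in I$'' arising from sampling without replacement from $L$ are negatively associated, the bound $\Pr(|I_S|<(1-\delta)\mu)\leq \exp(-\delta^2\mu/2)$ still applies. Taking $\delta = 3/5$ gives $(1-\delta)\mu\geq \sqrt{n}$, and the right-hand side becomes $\exp(-9\sqrt{n}/20)$, which is at most $1/10$ for all but a handful of tiny values of $n$; those boundary cases are absorbed either by brute force or by slightly enlarging the constant in $s$.

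The only real subtlety is that $|I_S|$ is hypergeometric rather than a sum of independent indicators, but this is the well-known setting in which Chernoff-type bounds continue to hold via negative association (or Hoeffding's reduction to the i.i.d.\ case), so no genuinely new argument is required.
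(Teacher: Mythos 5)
Your proof is correct, but it takes a genuinely different route from the paper's. The paper passes to the waiting-time random variable $X$, the number of draws from $L$ needed to collect $\sqrt{n}$ incomplete vertices, which follows a negative hypergeometric distribution; it bounds $\E[X]$ using the closed form of the negative-hypergeometric mean, observes $\Pr[|I_S|<\sqrt{n}]\le\Pr[X\ge|S|]$, and then applies Markov's inequality. That argument is purely first-moment and elementary, gives only a constant ($1/10$) failure probability, and works for all $n$. You instead lower-bound $\E[|I_S|]$ directly via the hypergeometric mean --- noting that the incomplete fraction of $L$ is at least $\epsilon/4$ once one uses $d\ge k$ --- and then apply a multiplicative Chernoff lower tail, justified by negative association (equivalently, Hoeffding's reduction of without-replacement sampling to the i.i.d.\ case). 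Your stochastic-monotonicity reduction to the boundary case $|S|=\lceil 10\sqrt{n}/\epsilon\rceil$ and the observation that $S$ is conditionally uniform in $L$ given $|S|$ are both sound and echo remarks the paper makes informally just before the lemma. What your route buys is an exponentially small failure probability $\exp(-\Theta(\sqrt{n}))$ rather than $1/10$, which would allow shaving logarithmic factors if one later needed a union bound over many events; the price is the caveat you already flag, namely that $\exp(-9\sqrt{n}/20)\le 1/10$ only once $n$ exceeds a small constant --- a degenerate regime where the tester could afford to read the entire input anyway, so the gap is cosmetic. One small advantage of your presentation is that by working with the ratio $|I|/|L|$ directly you cleanly separate the count of low-degree incomplete vertices ($\ge\epsilon d n/(4k)$) from the count over all of $V$, a distinction that the paper's displayed bound $\E[X]\le\sqrt{n}(n+1)/\bigl((\epsilon d n)/(2k)+1\bigr)$ elides.
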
}
\begin{proof}
	Since $S$ was sampled without replacement, the random variable $ |I_S| $ follows the hypergeometric distribution.
	Let $X$ be a random variable that denotes the number of draws that are needed to obtain $ \sqrt{n} $ incomplete vertices in $S$, which therefore follows the negative hypergeometric distribution.
	By \cref{EFAV}, we have \(
		\E[X]  \leq  \frac{\sqrt{n} \cdot (n+1)}{(\epsilon d n)/(2k) + 1} \). By the definition of $|I_S|$ and $X$, we have $\Pr[|I_S| < \sqrt{n}] \leq  \Pr[X \geq |S|]$. We apply Markov's inequality to obtain
	\(
		\Pr[X \geq |S|] %
		 \leq  {\frac{\sqrt{n} \cdot (n+1)}{|S| (\epsilon d n)/(2k) + 1}}
	\).
	It follows that $\lvert S \rvert \in \Omega(\sqrt{n})$ ensures $|I_S| \geq \sqrt{n}$ with sufficient probability.
	\tagged{full}{
	\begin{alignat*}{5}
		& |S| & \geq & \displaystyle\frac{20\sqrt{n}}{\epsilon} \\
		\Leftrightarrow & |S| & \geq & \frac{20\sqrt{n}(dn/(2k)+1)}{\epsilon(dn/(2k)+1)} \\
		\Rightarrow & |S| & \geq & \frac{10 \sqrt{n} n + 10 \sqrt{n}}{(\epsilon d n)/(2k) + 1} \\
		\Leftrightarrow & \frac{1}{10} & \geq & \frac{\frac{\sqrt{n}(n+1)}{(\epsilon d n)/(2k) + 1}}{|S|}\\
		\Rightarrow & \Pr[X \geq |S|] & \leq & \frac{1}{10}
	\end{alignat*}
	}
\end{proof}

\paragraph{Analysis of the Sample T: Proof of \cref{FindWitness}}

We prove the following lower bound on the number of witnesses in $G$, which will imply a bound on $|T|$ by \wi{$k$-reducing} it to the case $k=1$.
\begin{pro}
	\label{witnessbound}
	Given a point set $P \subset \setr^\delta$, $p \in P$ and $k \in \setn$, the maximum number of points $q_i \in P$ that can have $p$ as $k$-nearest neighbor is bounded by $k \cdot \psi_\delta$.
\end{pro}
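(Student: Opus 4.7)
The plan is to generalize the kissing-number argument behind \cref{NNTheorem} via a greedy peeling procedure. Let $Q \subseteq P$ denote the set of points that have $p$ as a $k$-nearest neighbor; the goal is to show $|Q| \leq k \cdot \psi_\delta$. I would iteratively select the point $q^\star \in Q$ that is currently farthest from $p$, remove $q^\star$ together with every remaining $q \in Q$ satisfying $\angle q\, p\, q^\star < \pi/3$, and repeat until $Q$ is empty.

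First I would argue that every iteration removes at most $k$ points. Fix $q^\star$ as the farthest remaining point and consider any other remaining $q$ with $\angle q\, p\, q^\star < \pi/3$. Writing $a = \dist(q^\star, p)$ and $b = \dist(q,p) \leq a$, the law of cosines gives
\[
\dist(q,q^\star)^2 < a^2 + b^2 - ab \leq a^2,
\]
so $q$ lies strictly inside the open ball $B(q^\star, a)$. Because $p$ is among the $k$ nearest neighbors of $q^\star$ in $P$, this open ball contains at most $k-1$ points of $P \setminus \{q^\star\}$, bounding the number of such $q$'s by $k-1$; counting $q^\star$ itself yields the bound of $k$ per iteration.

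Second, I would bound the number of iterations by $\psi_\delta$. If $q^\star_1, \dots, q^\star_\ell$ are the farthest points chosen in successive iterations, then for every $i < j$ the point $q^\star_j$ was not removed during step $i$, hence $\angle q^\star_i\, p\, q^\star_j \geq \pi/3$. The unit vectors $(q^\star_i - p)/\|q^\star_i - p\|$ therefore form a kissing configuration on the unit sphere in $\setr^\delta$, which is precisely the extremal configuration underlying \cref{NNTheorem}; consequently $\ell \leq \psi_\delta$. Multiplying the two bounds gives $|Q| \leq k \cdot \psi_\delta$.

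The main obstacle is exactly the step where the $k=1$ kissing argument breaks: when $k \geq 2$, two points $q, q'$ with $p$ as a $k$-nearest neighbor may well satisfy $\angle q\, p\, q' < \pi/3$, so one cannot place all of $Q$ on a single kissing configuration at $p$. The greedy peeling resolves this by charging each such small-angle violation to one of the at most $k-1$ points of $P$ that lie strictly closer to $q^\star$ than $p$, thereby paying a multiplicative factor of $k$ to recover a kissing-number argument on the sparser sub-selection $q^\star_1, \ldots, q^\star_\ell$.
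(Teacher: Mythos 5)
Your proof is correct and takes essentially the same approach as the paper: both perform a greedy farthest-first peeling from $p$, bound the number of points removed per iteration by $k$ via the $k$-NN property of $q^\star$ (the open ball $B(q^\star, \dist(q^\star,p))$ contains at most $k-1$ other points of $P$), and bound the number of iterations by $\psi_\delta$ through a kissing-number argument on the surviving/picked points. The only cosmetic difference is that the paper's removal criterion is phrased directly in terms of distance, $\dist(q^\star,q) < \dist(q^\star,p)$, and bounds the set of points remaining at termination (which all have $p$ as nearest neighbor, so \cref{NNTheorem} applies directly), whereas you phrase it via the equivalent angular condition $\angle q\,p\,q^\star < \pi/3$ and bound the sequence of picked points; both accountings give $|Q| \leq k \cdot \psi_\delta$.
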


We note that this bound is tight, as shown in \cref{thm:kreducing_tight}.

\begin{dfn}[$k$-reducing]
	\label{kreducing}
	Let $p \in P$ be an arbitrary point. Fix $Q := \{ q \in P \mid \numnearer(q,p) \leq k-1 \}$. 
	Repeat the following steps until $\forall q \in Q: \nexists q' \in Q \setminus \{q\}: \dist(q,q') < \dist(q,p)$.
	\begin{itemize}
		\item[$(\ast)$] Pick a point $q \in Q$ that lies furthest from $w$ and let $Q_q := \{ q' \in Q \mid q \neq q' \land \dist(q,q') < \dist(q,p) \}$.
		\item[(\#)] Set $Q:=Q\setminus Q_q$.
	\end{itemize}
\end{dfn}

\begin{proof}[Proof of \cref{witnessbound}]
	We apply \cref{kreducing} to $p$ and prove that the size of $Q$ at the beginning of the process is at most $k \cdot \psi_\delta$, which proves the claim.
	
	At first we show that every vertex that is picked by $(\ast)$ stays in $Q$: Let $q_1, q_2$ be arbitrary points that are picked by $(\ast)$ in the process of $k$-reducing, with $q_1$ being picked in an earlier iteration than $q_2$. The latter implies $\dist(q_2,p) < \dist(q_1,p)$. Assume that $q_1 \in Q_{q_2}$ at the time $q_2$ is selected, and therefore $q_1$ is removed from $Q$. Since $q_1$ is deleted by $(\#)$, it holds that $\dist(q_1,p) < \dist(q_2,p)$, which is a contradiction as $q_1$ has been selected before $q_2$.
	
	We continue to bound the maximum number of vertices that share their $k$-nearest neighbor: Because $p$ is the nearest point for the remaining $q \in Q$, we apply \cref{NNTheorem} and conclude that at most $\psi_\delta$ vertices are remaining in $Q$ after $k$-reducing. Since every iteration of step $(\#)$ removed at most $k-1$ points from $Q$, the cardinality of $Q$ at the beginning of the process was at most $\psi_\delta + (k-1) \cdot \psi_\delta = k \cdot \psi_\delta$.
\end{proof}

Since at most $k\cdot \psi_\delta$ vertices can share a witness by \cref{witnessbound}, there are at least $\frac{|I_S|}{k\cdot \psi_\delta}$ distinct witnesses of vertices in $S$. We employ this bound to calculate the size of the sample $T$ such that it contains at least one witness of an incomplete vertex in $S$ with constant probability.

\begin{lem}
	\label{SampleT}
	If $|S| \geq \frac{10\sqrt{n}}{\epsilon}$ and \(|T|  \geq \ln(10) \cdot k \cdot \psi_\delta \cdot \sqrt{n} \), then \(\Pr\left[\sum_{w \in T} W_{S}(w) = 0\right] \leq \frac{1}{10}\).
\end{lem}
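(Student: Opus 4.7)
The plan is to combine the witness-counting bound from \cref{witnessbound} with an elementary tail estimate on the independent draws composing $T$. Under the hypothesis $|I_S| > \sqrt{n}$ (which follows from $|S| \geq 10\sqrt{n}/\epsilon$ with probability at least $9/10$ by the preceding analysis of $S$, cf.\ \cref{Assumption}, and is the probabilistic content attributable to $S$), it suffices to argue that the $T$-sample almost surely contains some witness.

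First I would lower bound the number of \emph{distinct} witnesses. Let $W := \bigcup_{v \in I_S} \wit(v)$ be the union of all witness sets of incomplete vertices in $S$. For any $w \in W$ and any $v \in I_S$ with $w \in \wit(v)$, the vertex $w$ is by definition a $k$-nearest neighbor of $v$, so by \cref{witnessbound} the number of such $v \in I_S$ is at most $k\psi_\delta$. A double-counting argument over pairs $(v,w)$ then yields $|W| \geq |I_S|/(k\psi_\delta) > \sqrt{n}/(k\psi_\delta)$, so $W$ occupies at least a $1/(k\psi_\delta\sqrt{n})$ fraction of $V$.

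Next I would use that $T$ consists of $|T|$ independent uniform draws from $V$. The event $\sum_{w \in T} W_S(w) = 0$ is contained in $\{T \cap W = \emptyset\}$, so by independence and $1-x \leq e^{-x}$,
\[
\Pr\!\left[\sum_{w \in T} W_S(w) = 0\right] \;\leq\; \left(1 - \frac{|W|}{n}\right)^{|T|} \;\leq\; \exp\!\left(-\frac{|T|}{k\psi_\delta\sqrt{n}}\right).
\]
Substituting $|T| \geq \ln(10)\cdot k\psi_\delta\sqrt{n}$ bounds the right-hand side by $\exp(-\ln 10) = 1/10$, as required.

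The only nontrivial ingredient is the witness counting, which is exactly what \cref{witnessbound} supplies via the kissing-number bound; everything else is a one-line complement-and-independence estimate, so I do not anticipate a substantive obstacle.
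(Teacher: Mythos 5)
Your proof is correct and follows essentially the same route as the paper's: both use \cref{witnessbound} to show that the incomplete vertices in $S$ collectively possess at least $|I_S|/(k\psi_\delta)$ distinct witnesses, then bound $\Pr[T\cap W=\emptyset]$ via independence of the $|T|$ draws and $1-x\le e^{-x}$. The only difference is presentational: you make the double-counting argument for $|W|$ explicit, whereas the paper states the bound on the per-draw success probability $\Pr_{w\in V}[W_S(w)=1]\ge |I_S|/(k\psi_\delta n)$ directly, which is the same quantity.
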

\begin{proof}
	Since every vertex is sampled uniformly at random with replacement, the event that one vertex is a witness is a Bernoulli trial with probability \( \Pr_{w \in V}[W_S(w) = 1] \geq \frac{|I_S|}{k \cdot \psi_\delta} \cdot \frac{1}{n} = \frac{|I_S|}{k \cdot \psi_\delta \cdot n} \).
	Therefore $ \Pr_{w \in V}[W_S(w)=0] \leq \left(1-\frac{|I_S|}{k \cdot \psi_\delta \cdot n}\right) $. We have
	\begin{alignat}{5}
	& |T| & \geq & \ln(10) \cdot k \cdot \psi_\delta \cdot \sqrt{n} \notag \\
	\Rightarrow{} & |I_S| \cdot |T| & \geq & \ln(10) \cdot k \cdot \psi_\delta \cdot n \label{alphatrick} \\
	\Rightarrow{} & \left(1-\displaystyle\frac{|I_S|}{k \cdot \psi_\delta \cdot n}\right)^{|T|} & \leq & \frac{1}{10} \label{exptrick} \\
	\tagged{full}{\Rightarrow{} & \Pi_{u \in T} \Pr[W_S(u) = 0] & \leq & \frac{1}{10} \\}
	\tagged{full}{\Leftrightarrow{} & \Pr[\cap_{u \in T} \{ W_{S}(u) = 0 \}] & \leq & \frac{1}{10} \\}
	\Leftrightarrow{} & \Pr\left[\sum_{u \in T} W_S(u) = 0\right] & \leq & \frac{1}{10} \label{indtrick}
	\end{alignat}
	By \cref{Assumption}, \cref{alphatrick} holds for $|S|$ as chosen in \cref{Algorithm}. In \cref{exptrick} we use the fact that $1-x \leq e^{-x}$ and in \cref{indtrick} we use that all events $W_S(u)=0$ for $u \in T$ are independent Bernoulli trials. 
\end{proof}

Finally, we observe that the factor $k$ that is introduced in \cref{witnessbound} is tight.

\begin{lem}
	\label{thm:kreducing_tight}
	For every $\delta \geq 3, k \geq 2$, there exists a point set $P \subset \setr^\delta$ such that there is a set of $k \psi_\delta$ points $q_i \in P$ that have the same $k$-nearest neighbor.
\end{lem}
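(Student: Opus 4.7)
The plan is to construct $P$ explicitly along a kissing configuration. Put $p = \mathbf{0}$ at the origin, and let $v_1, \ldots, v_{\psi_\delta} \in \setr^\delta$ be unit vectors realising the kissing number, chosen so that $c := \max_{i \ne i'} v_i \cdot v_{i'} < 1/2$ (strict slack from the kissing bound). In $\setr^3$, for instance, the vertices of the regular icosahedron inscribed in the unit sphere give $\psi_3 = 12$ unit vectors with $c = 1/\sqrt{5} < 1/2$, and analogous non-tight configurations are available in higher dimensions. For each $i$, I would place $k$ witness points $q_{i,j} = (1 + \rho_j) v_i$ along the ray from $p$ through $v_i$, where $0 < \rho_1 < \cdots < \rho_k$ are distinct small positive numbers whose spread I pin down below. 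The set $P = \{p\} \cup \{q_{i,j} : i \in [\psi_\delta], j \in [k]\}$ then contains exactly $k \psi_\delta$ candidate witnesses.

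To verify that $p \in \knearest(q_{i,j})$ for every $(i,j)$, I check that $\numnearer(q_{i,j}, p) \le k - 1$. The distance from $q_{i,j}$ to $p$ is $\|q_{i,j} - p\| = 1 + \rho_j$, and on the same ray the other $k - 1$ members $q_{i,j'}$ of cluster $i$ lie at distances $|\rho_j - \rho_{j'}| < 1 + \rho_j$, which fills the quota of strictly closer points. For a cross-cluster point $q_{i',j'}$ with $i' \ne i$, the law of cosines together with $v_i \cdot v_{i'} \le c$ yields $\|q_{i,j} - q_{i',j'}\|^2 \ge (1 + \rho_j)^2 + (1 + \rho_{j'})^2 - 2c(1 + \rho_j)(1 + \rho_{j'})$, and this lower bound is at least $(1 + \rho_j)^2 = \|q_{i,j} - p\|^2$ exactly when $(1 + \rho_{j'}) \ge 2c(1 + \rho_j)$. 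Choosing the $\rho_j$'s with spread $\rho_k - \rho_1 \le (1 + \rho_1)(1 - 2c)/(2c)$, a positive quantity because $c < 1/2$, secures this inequality for every pair; hence no cross-cluster point is strictly closer to $q_{i,j}$ than $p$, and $\numnearer(q_{i,j}, p) = k - 1$ exactly, so $p \in \knearest(q_{i,j})$.

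The hardest step is producing the slack kissing configuration with $c < 1/2$. This is immediate in $\setr^3$ from the icosahedron and holds in most higher dimensions as well. In the exceptional dimensions $\delta = 8$ and $\delta = 24$, where the tight lattice kissing configurations have $v_i \cdot v_{i'} = 1/2$ for some pairs, the purely radial cluster above degenerates and one must replace it by tangential perturbations inside a ``safe cone'' at each $v_i$, together with a more careful second-order analysis to recover the cross-cluster inequality. Either way, the construction produces $k \psi_\delta$ witnesses with common $k$-nearest neighbour $p$, showing that the factor $k$ in \cref{witnessbound} cannot be improved.
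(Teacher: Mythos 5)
Your construction takes a genuinely different route from the paper's, and it contains a real gap.  The paper proves this lemma by taking a maximal kissing configuration $\tilde{P}$ around the origin and simply \emph{splitting each point of $\tilde{P}$ into $k$ coincident copies}: the $k-1$ copies at distance $0$ occupy the $1$-st through $(k-1)$-st nearest-neighbor slots (ties broken arbitrarily), and the origin, at distance $1$, is the $k$-th nearest neighbor of every copy. No slack in the kissing configuration is required, because the within-cluster distances are already $0 < 1$ and the cross-cluster distances are $\geq 1$ regardless of the exact pairwise angles.  Your approach instead tries to realize the witnesses as \emph{distinct} points spread radially along each ray, which is a cleaner geometric picture but forces the cross-cluster inequality $(1 + \rho_{j'}) \ge 2c(1 + \rho_j)$, and that only closes when the maximal cosine $c$ between kissing directions is strictly below $1/2$.

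That requirement is where the argument breaks.  In $\delta = 8$ and $\delta = 24$ the optimal kissing configurations (from $E_8$ and the Leech lattice) are unique up to isometry and necessarily contain pairs with inner product exactly $1/2$; there is no slack configuration attaining $\psi_\delta$, so your radial placement produces cross-cluster points that are strictly closer than $p$ for roughly half the ordered pairs $(j, j')$.  Your proposed repair --- ``tangential perturbations inside a safe cone'' plus ``a more careful second-order analysis'' --- is not carried out, and it is not at all clear that it would go through: a tangential perturbation of $q_{i,j}$ does not obviously push it away from the problematic neighboring clusters without simultaneously spoiling the within-cluster ordering. As written, the proposal therefore proves the lemma only for those $\delta$ in which a slack kissing configuration is known to exist, not ``for every $\delta \ge 3$''.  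The paper's coincident-points device avoids all of this: since the definitions of $\numnearer$ and $\knearest$ use strict inequality and the model permits vertices to share coordinates, degeneracy is harmless, and the bound $k\psi_\delta$ is attained with no geometric case analysis at all.  If you want distinct points, a safer route is to first take the paper's coincident construction and then perturb each cluster of $k$ copies by $k$ distinct radial offsets $\rho_1 < \cdots < \rho_k$ with $\rho_k$ so small that every pairwise distance changes by less than the minimum positive gap in the original configuration; that keeps the argument independent of whether $c = 1/2$.
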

\begin{proof}
	Take a set $P = \tilde{P} \cup (0, \ldots, 0)$ of $\delta$-dimensional points, where $\tilde{P}$ consists of $\psi_\delta$ points from $\mathbb{R}^\delta$ that have $(0, \ldots, 0) \in \mathbb{R}^\delta$ as their nearest neighbor. Create a new point set $P'$ from $P$ by splitting each point $p \in P \cap \tilde{P}$ into $k$ points $p_1, \ldots, p_k$. Breaking ties arbitrarily, the $1$ to $k-1$ nearest neighbors of $p_i$ are $\cup_{j \neq i} \{ p_j \}$ (with distance $0$), but $(0, \ldots, 0)$ is the k-nearest neighbor for all $p_i$, $i \in [k]$. Thus, $\lvert P' \rvert + 1 = k \cdot \lvert \tilde{P} \rvert + 1 = k \psi_\delta + 1$ and all points in $P'$ except the origin have $(0, \ldots, 0)$ as their $k$-nearest neighbor.
\end{proof}
\section{Lower Bound}

We prove the first lower bound by constructing two (distributions of) graphs that are composed of multiple copies of the same building block. All graphs in one distribution are $k$-nearest neighborhood graphs, and all graphs in the other distribution are $\epsilon$-far from the property. It suffices to show that no deterministic algorithm that makes $o(\sqrt{n})$ queries can distinguish these two distributions with sufficiently high probability. Our building block is defined as follows.

\begin{dfn}[line gadget]
	\label{LineGadget}
	Let $x \in \setr$. A \emph{line gadget} is a geometric, complete, directed graph $L_x=(V,E)$ of size~$k+1$. The vertices $v_1, \ldots, v_ {k+1} \in V$ have coordinates $x, x+1, \ldots, x+k \in \setr$.
\end{dfn}

Note that a line gadget is a $k$-nearest neighborhood graph itself. In the following, let $k^\prime := k+1$. The graphs in the first distribution $\mathcal{D}_1$ are composed of $n/k^\prime$ line gadgets with sufficiently large pair-wise distances that maintain the $k$-nearest neighborhood property. The construction of the distribution of $\epsilon$-far graphs $\mathcal{D}_2$ is a bit more complicated. Basically, we want to move $\lceil \epsilon n / k^\prime \rceil$ line gadgets to the exact position of $\lceil \epsilon n / k^\prime \rceil$ other line gadgets such that in the resulting graph, $\lceil \epsilon n / k^\prime \rceil$ pairs of line gadgets share the same coordinates. However, we have to make sure that the algorithm is oblivious of this relocation with sufficiently high probability. \tagged{conference}{We provide a sketch of the proof here, the whole proof is contained in the full version \cite{FicThe18}.}

\begin{lem}
	Testing whether a graph is a $k$-nearest neighborhood graph with two-sided error requires $\sqrt{n / (8 \epsilon k^\prime)}$ queries.
\end{lem}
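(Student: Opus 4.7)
The plan is to apply Yao's minimax principle with the two distributions $\mathcal{D}_1, \mathcal{D}_2$ already sketched in the excerpt, and to show that no deterministic algorithm making $q < \sqrt{n/(8 \epsilon k')}$ queries can distinguish them with success probability $\geq 2/3$. I would first make $\mathcal{D}_1$ explicit by fixing $n/k'$ offsets $x_1 < \cdots < x_{n/k'}$ that are pairwise more than $k$ apart and taking each $G_1 \in \mathcal{D}_1$ to be the disjoint union of line gadgets $L_{x_1}, \ldots, L_{x_{n/k'}}$; every such $G_1$ is then a $k$-NN graph because each $L_{x_i}$ is complete on $k+1$ points whose mutual distances are strictly smaller than every cross-gadget distance. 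To sample $G_2 \sim \mathcal{D}_2$ I would draw a uniformly random matching $M$ of $\lceil \epsilon n/k' \rceil$ pairs on the gadget indices $[n/k']$ and, for each $\{i,j\} \in M$, relocate the vertices of $L_{x_j}$ so that they occupy the same coordinates as those of $L_{x_i}$; the edge set stays the same, only $\coord_G$ is modified.

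The next step is to verify that every $G_2 \sim \mathcal{D}_2$ is $\epsilon$-far from the property. Inside each colocated pair, the vertex at a given position has a partner at distance $0$, which must appear in its $k$-nearest neighborhood but is not among its current out-neighbors; under any tie-breaking, each of the $2(k+1) \lceil \epsilon n/k' \rceil$ doubled vertices is therefore incomplete in the sense of \cref{witnesses}, and repairing it requires changing $\Theta(k)$ outgoing edges (adding the partner-gadget vertices that now belong to the true $k$-NN and removing the displaced internal edges). Summing, the number of required edge modifications exceeds $\epsilon d n = \epsilon k n$, which is exactly the $\epsilon$-far condition.

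Now I would set up the indistinguishability claim used for Yao. Because $\mathcal{D}_2$ only rewrites the coordinates of whole gadgets and internally a colocated gadget is still a complete digraph on $k+1$ consecutively-spaced points, a query to $f_G$, $\degr_G$, or $\coord_G$ that lands in exactly one member of a matched pair returns data distributionally identical to the corresponding query in $\mathcal{D}_1$ (provided the gadget-to-offset assignment in $\mathcal{D}_1$ is itself randomised so that no single-gadget query leaks whether that gadget is matched). Coupling the two executions vertex by vertex, conditioned on the event that no two queries hit both members of the same pair, the two transcripts, and hence the algorithm's output, coincide.

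Finally, a birthday-style union bound closes the argument: the $q$ queries touch at most $q$ distinct gadget indices, and by symmetry each unordered pair of indices lies in $M$ with probability $\Theta(\epsilon k'/n)$, so the collision probability is at most $\binom{q}{2} \cdot O(\epsilon k'/n) = O(q^2 \epsilon k'/n)$, which stays below $1/3$ whenever $q < \sqrt{n/(8 \epsilon k')}$. The main obstacle I anticipate is the coupling/indistinguishability claim: one must arrange the gadget-to-offset assignment and the tester's access pattern carefully so that a single-gadget view truly leaks nothing about whether that gadget is matched, and one must track the ceilings $\lceil \epsilon n/k' \rceil$ through the $\epsilon$-farness count so that the final threshold comes out with the stated constant $1/8$ rather than a weaker numerical factor.
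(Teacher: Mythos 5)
Your plan follows the paper's proof essentially step by step: same line-gadget building block, same two distributions (one with well-separated gadgets, one with $\lceil\epsilon n/k'\rceil$ gadgets relocated to coincide with others), and the same birthday-style union bound over pairs of queries to cap the collision probability at $1/4$ for $q<\sqrt{n/(8\epsilon k')}$. The only notable difference is in formalizing indistinguishability: you propose a coupling argument, whereas the paper directly computes, conditioned on no colocated pair having been fully revealed, that the probability the next query reveals any given undiscovered gadget is the same in both distributions, so the knowledge-graph distributions coincide until a collision occurs; this is a bookkeeping choice rather than a different route.
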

\tagged{conference}{
\begin{proof}[Proof sketch]
	It is sufficient to show that for any deterministic algorithm that makes $o(\sqrt{n})$ queries, the distributions of knowledge graphs that are obtained from distributions $\mathcal{D}_1$ and $\mathcal{D}_2$, respectively, have small statistical distance.
	
	Without loss of generality, one can assume that every query of the algorithm to a graph from $\mathcal{D}_1$ reveals a line gadget that is not in the knowledge graph yet. Then, the probability that some line gadget is revealed by the $i$-th query is uniform over all undiscovered line gadgets. Now, consider a graph from $\mathcal{D}_2$. Call all line gadgets that share their coordinates with another line gadget to be blue, and call all other line gadgets to be red. One can show that if a query does not reveal a blue line gadget such that a previous query revealed a blue line gadget with the same coordinates, then the revealed line gadget is distributed uniformly among all other undiscovered (red or blue) line gadgets.
	
	The probability that two (blue) line gadgets with the same coordinates are revealed by the first $b = \sqrt{n / (8 \epsilon k^\prime)}$ queries is upper bounded by the probability that for any pair of queries $(i,j) \in [b]^2$, query $i$ hits one of the $k^\prime$ vertices in one of the $\lceil \epsilon n / k^\prime \rceil$ moved line gadgets times the probability that query $j$ hits its (blue) counterpart. One can show that this probability is at most $\sum_{i,j \in [b]} \lceil \frac{\epsilon n}{k^\prime} \rceil \cdot \frac{k^\prime}{n} \cdot \frac{k^\prime}{n} \leq b^2 \frac{\epsilon k^\prime}{n} \leq 1/4$. Therefore, the total variation distance between the knowledge graph distributions is at most $1/4$.
\end{proof}
}
\tagged{full}{
\begin{proof}
	For the sake of simplicity, let $n$ be a multiple of $k^\prime$. Let $G=(V, E)$ be a graph that is composed of $n/k^\prime$ line gadgets $L_{3k^\prime i}$ for $i \in [n/k^\prime]$, and let $\mathcal{D}_1$ be the uniform distribution over all vertex labellings of $G$. All graphs in $\mathcal{D}_1$ are $k$-nearest neighborhood graphs. For every graph $G \in \mathcal{D}_1$, we define a random graph $G'$ as follows. Let $S(G) = (s_i)_{i \in [2 \lceil \epsilon n / k^\prime \rceil]}$ be a sequence of random numbers, drawn without replacement from the uniform distribution over $[n]$. For every $i \in \lceil \epsilon n / k^\prime \rceil$, move the line gadget $L_{3k^\prime s_i}$ to the coordinates of $L_{3k^\prime s_{i + \lceil \epsilon n / k^\prime \rceil}}$ such that $G'$ contains no $L_{3k^\prime s_i}$ but two $L_{3k^\prime s_{i + \lceil \epsilon n / k^\prime \rceil}}$ afterwards. Note that $G'$ is $\epsilon$-far from being a $k$-nearest neighborhood graph. Let $\mathcal{D}_2$ be the uniform distribution over $\bigcup_{G \in \mathcal{D}_1} G'$.
	
	We may assume that if the tester queries for (a neighbor of) a vertex $v$, then the oracle returns the whole line gadget that $v$ belongs to. This is only beneficial for the query complexity of the algorithm. We consider the knowledge graph of the algorithm, which is defined as the subgraph of the input graph that consists of the vertices, edges and non-edges that are revealed by the answers to the queries. Without loss of generality, we may assume that the algorithm only asks queries whose answers cannot be deduced from the knowledge graph. To prove the theorem, it is sufficient to show that for any deterministic algorithm that makes $o(\sqrt{n})$ queries, the distributions of knowledge graphs that are obtained from distributions $\mathcal{D}_1$ and $\mathcal{D}_2$, respectively, have small statistical distance.
	
	First, note that every query of the algorithm to a graph from $\mathcal{D}_1$ reveals a line gadget that is not in the knowledge graph yet. Let $R$ be the set of revealed vertices after the $i$-th query $q_i$. Then, the probability that $q_{i+1}$ reveals a line gadget $L_{3k^\prime j}$ is $k^\prime/(n - k^\prime i)$ if $L_{3k^\prime j} \notin R$ and $0$ otherwise.
	
	We turn to $\mathcal{D}_2$ now. Consider the $i$-th query $q_i$. We claim that conditioned on the event that $q_1, \ldots, q_{i+1}$ do not reveal two instances of a line gadget that is contained twice in the graph, then the line gadget that is revealed by $q_{i+1}$ is distributed uniformly. Given $G \in \mathcal{D}_2$, let $S_j(G)$ be the $j$-th half of $S(G)$ for $j \in \{ 1,2 \}$ such that $S(G)$ is the concatenation of $S_1(G)$ and  $S_2(G)$, and let $S_2'(G)$ denote the subset of $S_2(G)$ that has already been revealed. Let $\mathcal{D}_2'$ be the restriction of $\mathcal{D}_2$ to the graphs that are compatible with the current knowledge graph, and the let $\ell$ be the support size of $\mathcal{D}_2'$. We denote the event that a line gadget $L_{3k^\prime j}$ is revealed by $E_j$. Fix some arbitrary $j$. We have
	\begin{align*}
		& \Pr[E_j] 
			=  \sum_{G \in \mathcal{D}_2'} \frac{1}{\ell} \cdot \Pr[E_j \mid j \notin S_2'(G)] \\
		& =  \frac{1}{\ell} \! \left[
			\sum_{\substack{G \in \mathcal{D}_2'\\ j \in S_1(G)}} \!\!\! 0
			+ \! \sum_{\substack{G \in \mathcal{D}_2'\\ j \in S_2(G)}} \! \frac{2k^\prime}{n-k^\prime i-k^\prime |S_2'(G)|}
			+ \! \sum_{\substack{G \in \mathcal{D}_2'\\ j \notin S(G)}} \! \frac{k^\prime}{n-k^\prime i-k^\prime |S_2'(G)|} \right]
		  \! = \! \frac{k^\prime}{n-k^\prime i-k^\prime|S_2'(G)|}
	\end{align*}
	Therefore, the knowledge graph distribution of $\mathcal{D}_2$ is exactly the same as the distribution of $\mathcal{D}_1$, i.e., uniform over all graphs that are compatible with current knowledge graph, as long as the algorithm does not reveal two line gadgets with the same coordinates. The probability that two line gadgets with the same coordinates are revealed by the first $b = \sqrt{n / (8 \epsilon k^\prime)}$ queries is upper bounded by the probability that for any pair of queries $(i,j) \in [b]^2$, query $i$ hits one of the $k^\prime$ vertices in one of the $\lceil \epsilon n / k^\prime \rceil$ line gadgets $L_{3k^\prime s_{\ell + \lceil \epsilon n / k^\prime \rceil}}$ such that $\ell \in \lceil \epsilon n / k^\prime \rceil$, times the probability that query $j$ hits the same gadget. In particular, by the union bound this probability is at most $\sum_{i,j \in [b]} \lceil \frac{\epsilon n}{k^\prime} \rceil \cdot \frac{k^\prime}{n} \cdot \frac{k^\prime}{n} \leq b^2 \frac{\epsilon k^\prime}{n} \leq 1/4$. Therefore, the total variation distance between the knowledge graph distributions is at most $1/4$.
\end{proof}
}

In property testing, it is common to fix problem specific parameters such as the dimension and analyze the asymptotic behavior with respect to $n$ and $\epsilon$. However, it may be interesting that the computational complexity of a tester for $k$-nearest neighborhood graphs is at least linear in $\psi_\delta$. \tagged{conference}{A sketch of the proof is provided in the full version \cite{FicThe18}.}

\begin{lem}
	\label{thm:lower_bound_dimension}
	Testing whether a graph of size $n$ is a $k$-nearest neighborhood graph with two-sided error requires at least $k \psi_\delta / 6$ queries. 
\end{lem}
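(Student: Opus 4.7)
My plan is to build a hard instance from the gadget of \cref{thm:kreducing_tight} and then run a Yao-style indistinguishability argument. From that lemma I obtain a configuration $H$ consisting of a central point $p$ together with $\psi_\delta$ clusters of $k$ co-located points, such that every non-central point has $p$ as its $k$-nearest neighbor; in particular $|H|=k\psi_\delta+1$. In the natural $k$-NN graph on $H$, each non-central vertex $q$ has out-edges to $p$ and to the $k-1$ other points of its cluster. I would place $m:=\lfloor n/(k\psi_\delta+1)\rfloor$ isometric copies of $H$ in $\setr^\delta$ with pairwise distances large enough that the $k$-nearest neighbor relation of each copy is unaffected by the others, handling the at most $k\psi_\delta$ leftover vertices by appending a trivially correct line-gadget configuration as in \cref{LineGadget}.

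The accepting distribution $\mathcal{D}_1$ is then the natural $k$-NN graph on this union. For the $\epsilon$-far distribution $\mathcal{D}_2$, I independently pick a uniformly random non-central vertex $q^*$ in each copy and delete the edge $(q^*,p)$. Then $q^*$ has out-degree $k-1$ while $p$ is still its $k$-th nearest neighbor, so $q^*$ is incomplete and one single-edge insertion per copy is required to repair the graph. The total number of edits, $m$, exceeds $\epsilon d n$ whenever $\epsilon d<1/(k\psi_\delta+1)$, so for such parameter ranges every $G\in\mathcal{D}_2$ is $\epsilon$-far from being a $k$-NN graph.

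The heart of the argument will be the following view-indistinguishability claim: the answers returned by $f_G$, $\degr_G$, and $\coord_G$ coincide on $\mathcal{D}_1$ and $\mathcal{D}_2$ for every query that does not hit some $q^*$, because the vertex set, coordinates, and adjacency lists agree everywhere except at the $q^*$ vertices. Within a single copy, $q^*$ is uniform over the $k\psi_\delta$ non-central vertices, and all of these vertices have identical coordinates within their cluster and identical out-neighborhoods in both distributions; hence, conditioned on no prior query having hit the local $q^*$, the next query into that copy lands on $q^*$ with probability at most $1/(k\psi_\delta)$. A union bound over the $b$ queries then shows that the total variation distance between the two view distributions is at most $b/(k\psi_\delta)$.

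The standard Yao argument then concludes that a two-sided error tester needs $b/(k\psi_\delta)\geq 1/3$, giving $b\geq k\psi_\delta/3\geq k\psi_\delta/6$, with slack to spare for the stated constant. The main technical obstacle I foresee is ruling out adaptive strategies that exploit information already gathered to localise $q^*$ within a single copy; the resolution is that the $k$ vertices within any cluster are co-located and (away from $q^*$) have identical out-neighborhoods, so once the gadget type is identified every untouched non-central vertex remains equally likely to be $q^*$, and no adaptive strategy beats uniform guessing. Combined with the view-coupling argument above, this gives the desired bound.
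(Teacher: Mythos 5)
Your argument is sound and proves the claimed bound (indeed a slightly stronger $k\psi_\delta/3$), but it takes a genuinely different route from the paper. Both proofs start from the gadget of \cref{thm:kreducing_tight}, but you encode farness by deleting exactly one out-edge $(q^*,p)$ in \emph{every} copy, with $q^*$ uniform over the $k\psi_\delta$ non-central vertices of its copy; the paper instead modifies only $\lceil 2\epsilon c\rceil$ of the $c$ copies, and does so geometrically (perturbing $q_i$ to $q_i'$ and inserting a fresh point $q_i''$ so that roughly half the copy's vertices acquire a new $k$-nearest neighbor), then obtains the yes-instance by relocating $Q''=\{q_i''\}$ far away. Your construction yields a noticeably cleaner indistinguishability argument -- a single coupling in which the views diverge only on the $q^*$ vertices, whose per-copy hitting probability is uniform, so the union bound $b/(k\psi_\delta)$ is immediate -- whereas the paper must argue about the hit probability of the set $Q''$. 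The trade-off is that your no-instances are at distance roughly $m/(dn)\approx 1/(d(k\psi_\delta+1))$ from the property, so the argument only applies when $\epsilon d < 1/(k\psi_\delta+1)$; the paper's construction lets the number of corrupted gadgets scale with $\epsilon$, covering all $\epsilon\le 1$ (and in fact giving $\Omega(k\psi_\delta/\epsilon)$). Two small remarks: your claim that cluster mates have \emph{identical} out-neighborhoods is not literally true (each excludes itself), but this is immaterial since the only detectable discrepancy is the missing edge at $q^*$; and you should state explicitly that $\degr_G$ returns out-degree, as your coupling relies on $p$'s queryable degree being unchanged after the deletions.
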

\tagged{full}{
	\begin{proof}[Proof sketch]
		Let $P'(q)$ be a point set as constructed in the proof of \cref{thm:kreducing_tight} with the common $k$-nearest neighbor located at $q$. Without loss of generality, assume that $n = (c+1) \cdot \lvert P' \rvert$ for some $c \in \mathbb{N}$. Let $q_i = (i, 0, \ldots, 0) \in \mathbb{R}^\delta$ for $i \in [c]$. By scaling $P'$ accordingly, we can construct a set $R = P'(q_1) \cup \ldots \cup P'(q_c)$ such that for every $i \in [c]$ and every $p \in R \cap P'(q_i) \setminus \{ q_i \}$, $q_i \in Q$ is the $k$-nearest neighbor of $p$. Let $G'$ be the $k$-nearest neighborhood graph of $R$.
		
		We construct two graphs, $G$ and $H$, and prove that one requires at least $k \psi_\delta / 6$ queries to distinguish between uniform distributions over all labellings of $G$ and $H$, respectively. For $i \in [\lceil 2 \epsilon c \rceil]$ and some sufficiently small $\delta$, we move $q_i$ to $q'_i = (i-\delta, 0, \ldots)$ and insert a new point $q''_i = (i+\delta, 0, \ldots, 0)$. Now, at least half of the vertices from $P'(q_i)$ have one of these points as $k$-nearest neighbor. Without loss of generality, let $q''_i$ be this point. We obtain $G$ by applying this modification to $G'$. We obtain $H$ from $G$ by moving $q''_i$ for every $i \in [\lceil 2 \epsilon c \rceil]$ to $(-1, 0, \ldots, 0)$ and recalculating the $k$-nearest neighbors for all these points. It follows that $H$ is a $k$-nearest neighborhood graph, while $G$ is $\epsilon$-far from being a $k$-nearest neighborhood graph.
		
		By the union bound, the probability to sample a point from $Q'' = \cup_{i \in [\lceil 2 \epsilon c \rceil]} \{ q''_i \}$ is upper bounded by $\frac{3 \epsilon c}{n} \leq \frac{3 n}{n \lvert P' \rvert} \leq \frac{3}{k \psi_\delta}$. Applying the union bound once again, it follows that the first $k \psi_\delta / 6$ queries to $G$ or $H$ will not contain any point from $Q''$ with probability greater than $1/3$.
	\end{proof}
}
\section{Experiments}

As discussed above, property testing aims at distinguishing perfect objects and objects that have many flaws at very small cost. Given the output of an approximate nearest neighbor (ANN) algorithm, a natural use case for a property tester is to decide whether the nearest neighbor index computed by the ANN algorithm is accurate or resolves many queries incorrectly. 

Although \cref{Algorithm} already gives values for the sizes of $|S|$ and $|T|$, one would probably want to minimize the running time of the tester beyond worst-case analysis in practice. When used as a tool to assess an ANN index before actually putting it to work, it is also important that the tester actually reduces the total computation time compared to observing poor results at the end of the data processing pipeline (e.g., bad classification results) and starting over. Therefore, we seek to answer the following questions:

\begin{description}
	\item[Q1\label{Q1}] Parameterization. What quality of ANN indices can be tested by different choices of $\vert S^\prime \vert, \vert T \vert$?
	\item[Q2\label{Q2}] Performance. How does the testing time compare to the time required by the ANN algorithm?
\end{description}

\paragraph{Setup}

We implemented our property tester in C++ and integrated it into the Python framework \emph{ANN-Benchmarks}~\cite{AumANN17, BerAnn18}. The key-idea of \emph{ANN-Benchmarks} is to compare the quality of the indices built by ANN implementations, with respect to their running times and query-answer times. To evaluate our property tester, we chose three algorithms with the best performance observed in \cite{AumANN17}: \emph{KGraph}~\cite{AlgKgr18} and \emph{hnsw} and \emph{SW-graph} from the \emph{Non-Metric Space Library}~\cite{Nms13,Nms18}. All  of the ANN algorithms are implemented in C / C++ and build upon nearest neighbor / proximity graphs. We computed the ground truth, i.e., a $k$-NN graph of the input data, for the Euclidean datasets \emph{MNIST}~(size \np{60000}, dimension \np{960}, \cite{MNI}), \emph{Fashion-MNIST}~(size \np{60000}, dimension \np{960}, \cite{Fas18}) and \emph{SIFT}~(size \np{1000000}, dimension \np{128}, \cite{UCI}) to evaluate the answers of the tester.

We ran our benchmarks on identical machines with \SI{60}{\giga\byte} of free RAM guaranteed and an Intel Xeon E5-2640 v4 CPU running at \SI{2.40}{\giga\Hz} (capable of running 20 concurrent threads) and measured CPU time. To minimize interference between different processes, a single instance of an ANN algorithm was run exclusively on one machine at a time.

The C++ source code of the property tester that was used for the experiments is available here \cite{hendrik_fichtenberger_2018_1463804}. The modified version of ANN-Benchmarks is available here \cite{erik_bernhardsson_2018_1463824}.

\paragraph{\ref{Q1}: Parameterization of the Property Tester}

We analyze how different choices for $\lvert S' \rvert$ and $\lvert T \rvert$ in \cref{Algorithm} affect which quality of ANN indices the tester is likely to reject. All ANN algorithms were run ten times for each choice of parameters built into ann-benchmarks (as listed in \cite{BerAnn18a}) and every dataset. Then the tester was run once for each output and for every choice from $\{ 0.001, 0.01, 0.1 \} \times \{ 0.05, 0.5, 5 \}$ for $(c_1, c_2)$ in $\lvert S' \rvert = c_1 \cdot 8 k \sqrt{n}$ and $\lvert T \rvert = c_2 \cdot k \sqrt{n} \log(10)$, with oracle access to the resulting ANN index. We chose to evaluate the tester for $k=10$ because indices that are very close to $10$-NN graphs -- which is the hard case for the property tester to detect -- can be computed by the ANN algorithms in reasonable time, and we support this decision by an additional experiment for $k=50$. The ground truth, i.e., a $k$-NN graph of each dataset, and the $\epsilon$-distance (see \cref{sec:preliminaries}) of each ANN index to a $k$-NN graph was computed offline.

We evaluate the recall of the property tester by distance of a tested ANN index to ground truth, where graphs that are no $k$-NN graphs are relevant (note that the tester always provides a witness when it rejects, so its precision is 1). Since the quality of an ANN index varies depending on the ANN algorithm's parameters and internal randomness, we group the computed ANN indices into buckets according to their distance to ground truth and depict the resulting recall on these classes in \cref{fig:parameterization} for all datasets combined and for each dataset individually. As the oracle access that is provided to the property tester is oblivious of the underlying ANN algorithm, the figures show the combined results for all algorithms.

We observe that for distances and parameters that result in a reasonable overall recall, say, at least greater than 0.75, the property tester behaves comparable on all datasets. Since the property tester is guaranteed to have precision 1, even parameterizations with low recall on a small distance can be amplified by running the tester multiple times, possibly for different values of $c_1, c_2$. In summary, after choosing a target distance that the property tester should detect, the tested parameters seem suitable for data with dimensions up to roughly $800$. For higher dimensions, it is likely advisable to apply dimensionality reduction techniques first before computing and using nearest neighbors in Euclidean space.

\begin{figure}
	\caption{Recall of the property tester by $\epsilon$-distance of the ANN index to a 10-NN graph for different choices of the testers' parameters $c_1, c_2$. Distances are grouped into classes $(0, 10^{-4}], (10^{-4}, 10^{-3}], \ldots$, all of size at least 150 (lateral axis shows upper bound of the respective bucket). For example, the property tester rejected more than 95\% of the ANN indices that are between 0.005-far and 0.01-far from being a $10$-NN graph for $c_1 = 0.01, c_2 = 5$.}
	\label{fig:parameterization}
	\untagged{noshell}{
		\includesvg[width=\textwidth]{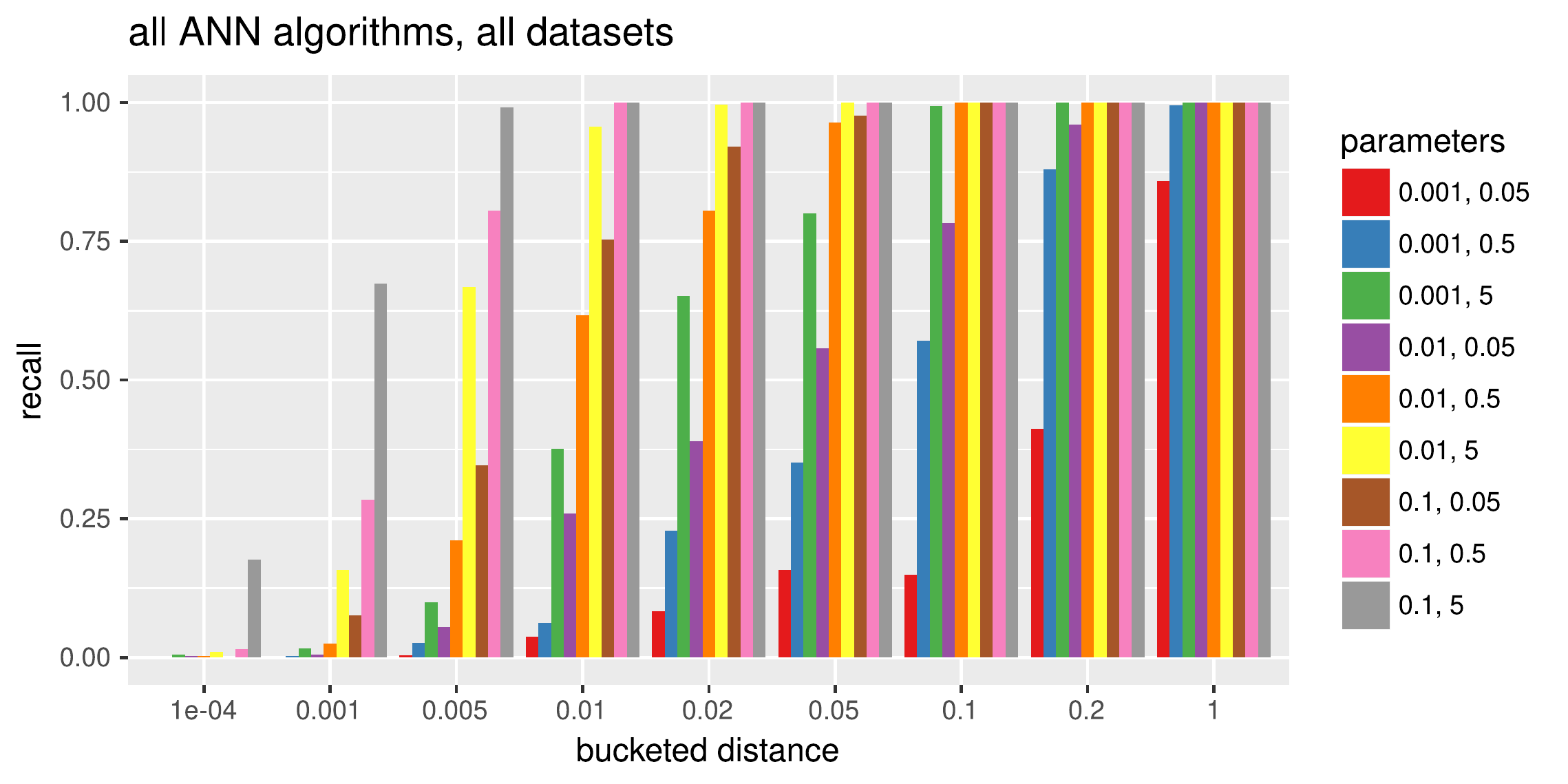}
		\includesvg[width=0.32\textwidth]{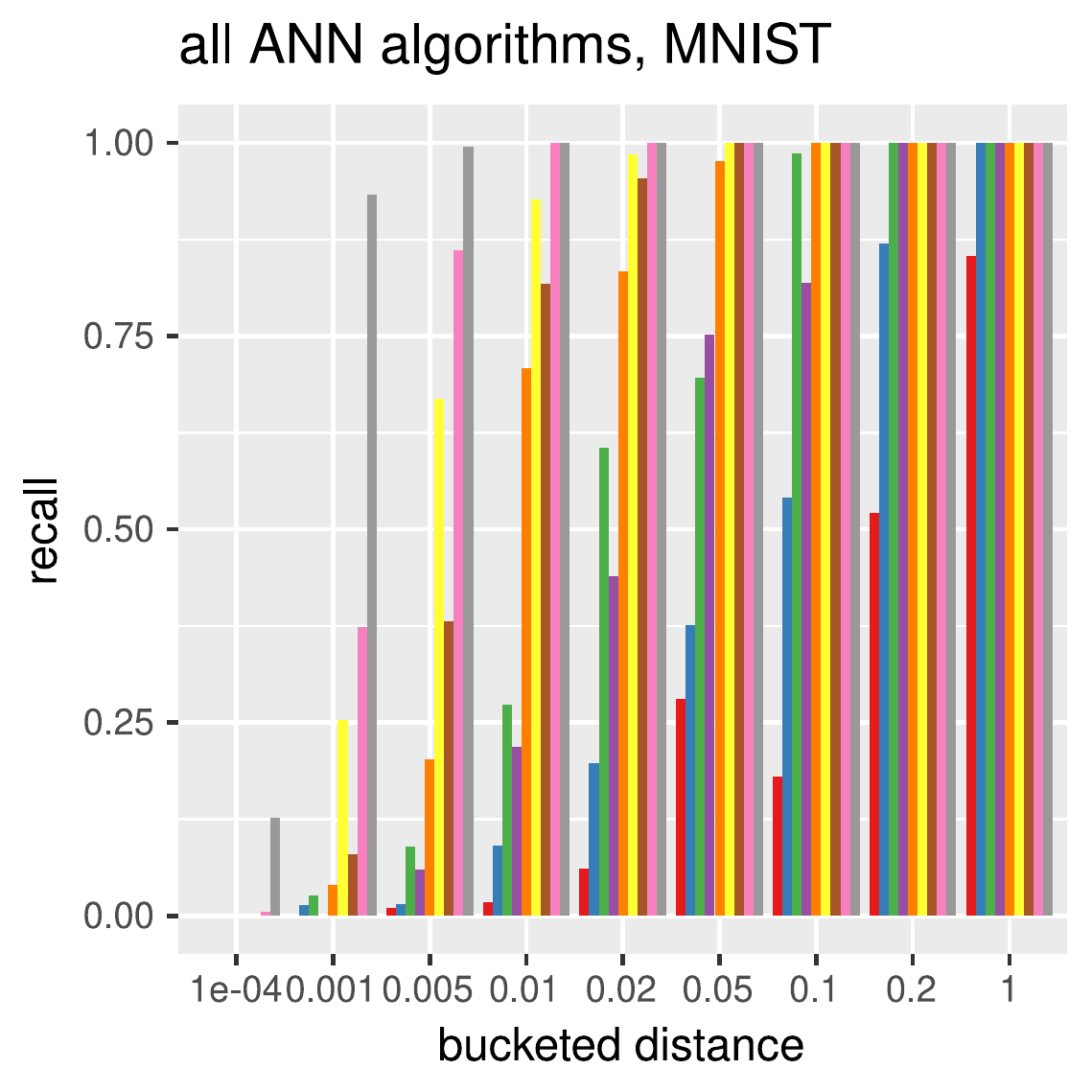}
		\includesvg[width=0.32\textwidth]{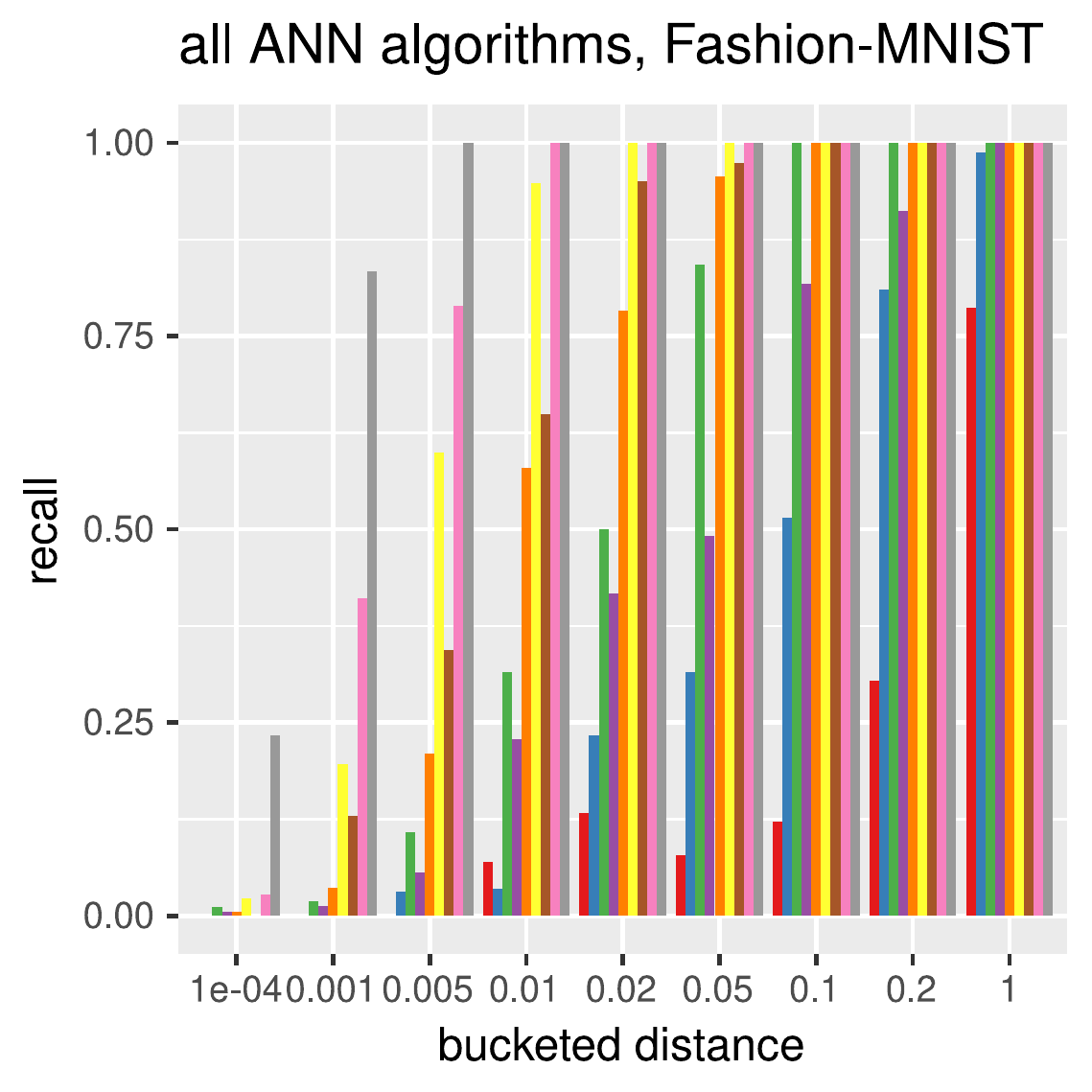}
		\includesvg[width=0.32\textwidth]{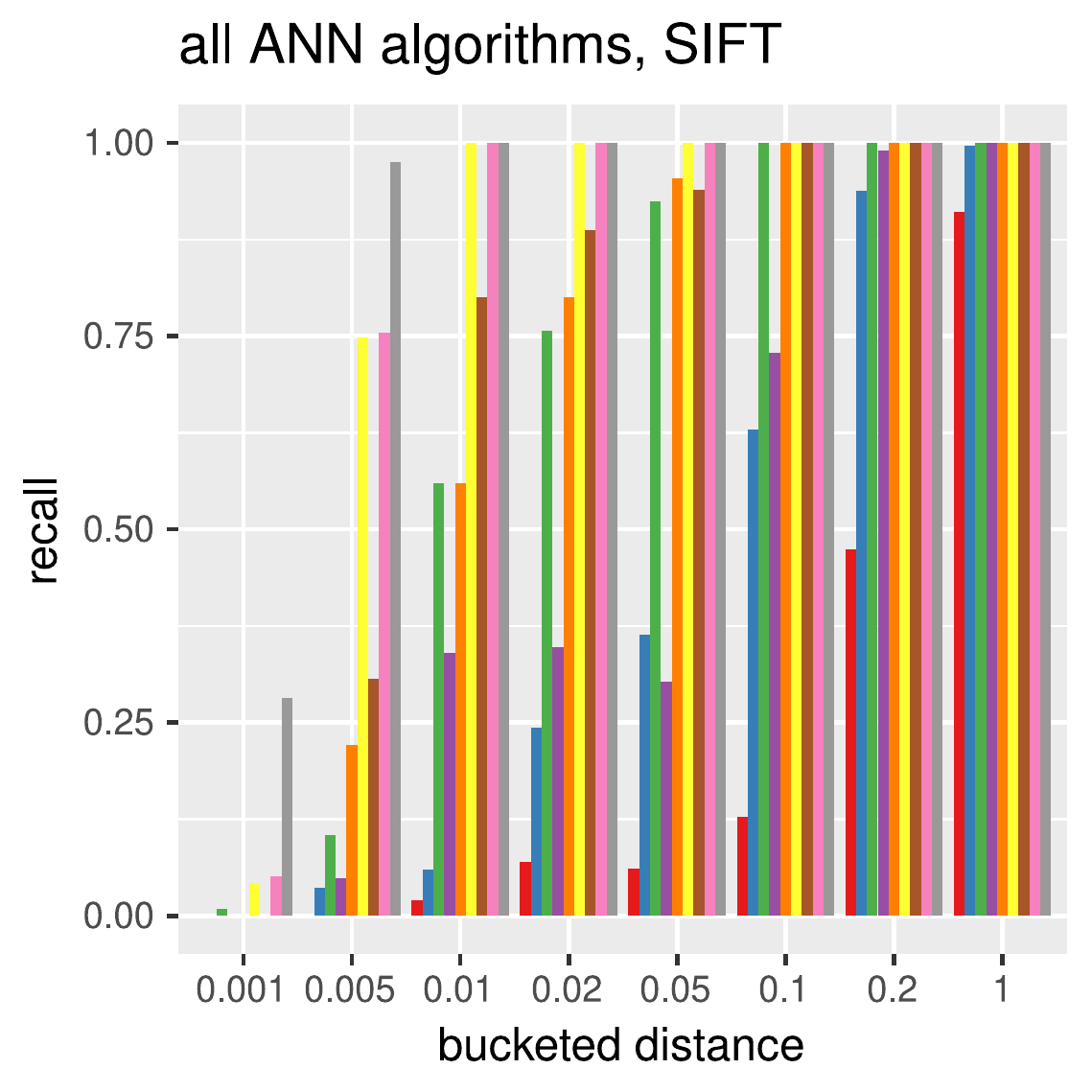}
	}
	\tagged{noshell}{
		\includegraphics[width=\textwidth]{recall}
		\includegraphics[width=0.32\textwidth]{recall_mnist}
		\includegraphics[width=0.32\textwidth]{recall_fashion}
		\includegraphics[width=0.32\textwidth]{recall_sift}
	}
\end{figure}

To get an indication of how the tester behaves for larger $k$, we conducted an additional experiment where we ran the tester on KGraph indices with $k=50$. As one might expect, less indices are close to being a $50$-NN graph than a $10$-NN graph for the same sets of KGraph parameters (although the distance is normalized by $k$ and therefore it allows more errors), but the results indicate that it is also easier for the property tester to spot errors. This suggests that, at least for KGraph, errors are spread quite uniformly in the index rather than they are concentrated on some vertices.

\begin{figure}
	\caption{Recall of the property tester by $\epsilon$-distance of the ANN index to a $k$-NN graph for different choices of the tester's parameters and $k=\{10,50\}$. As in \cref{fig:parameterization}, distances are grouped into classes.}
	\untagged{noshell}{
		\includesvg[width=0.49\textwidth]{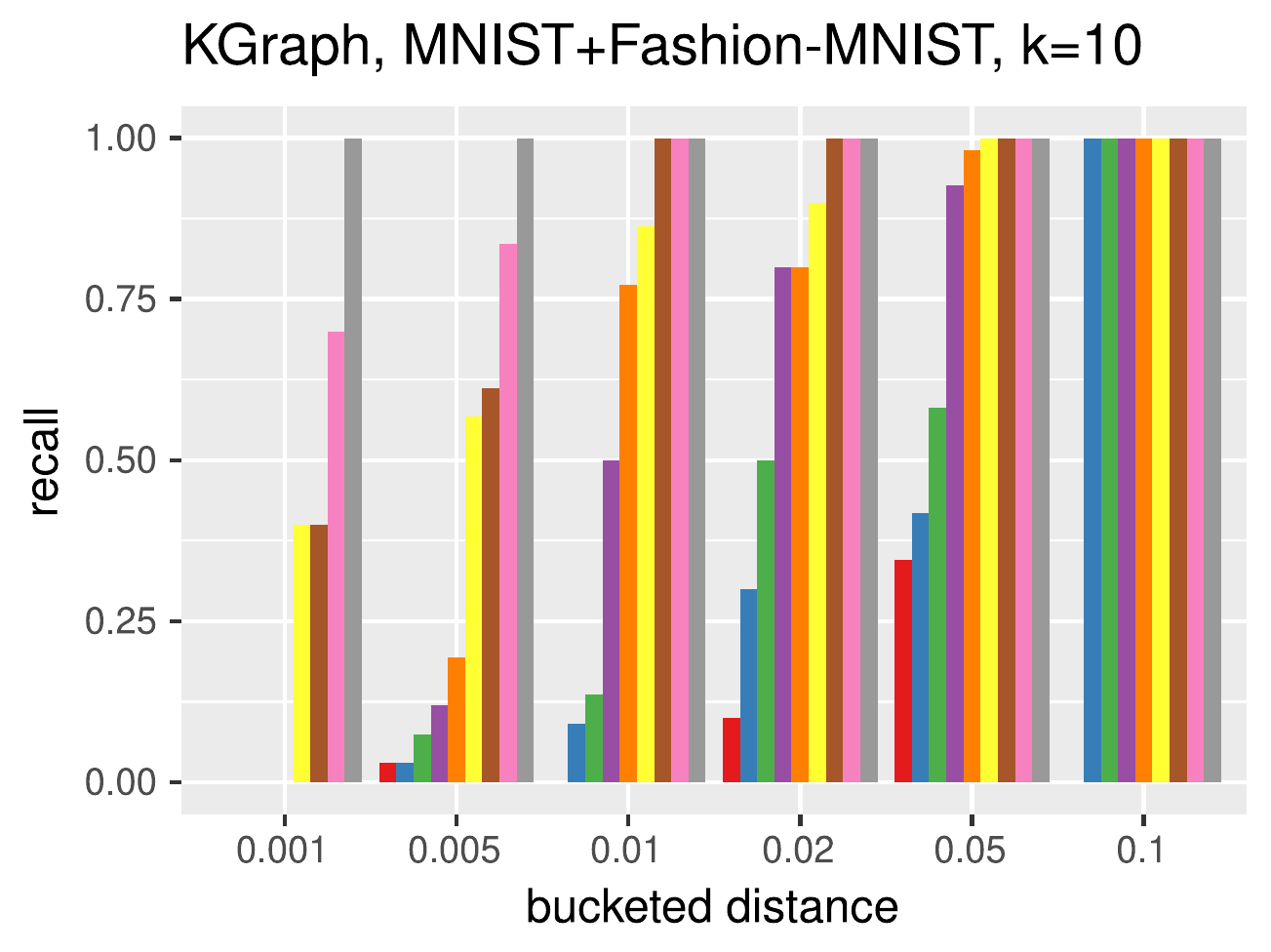}
		\includesvg[width=0.49\textwidth]{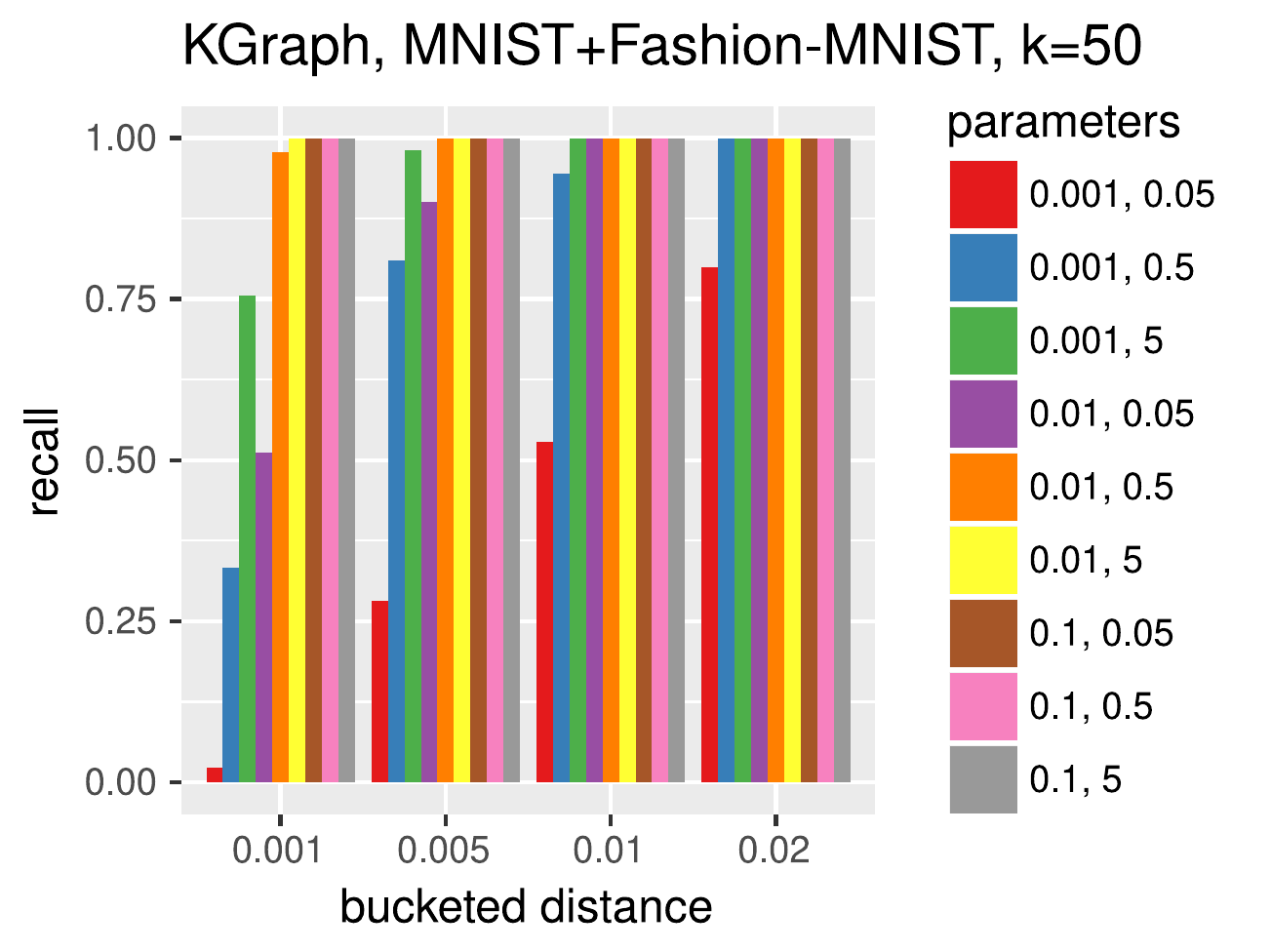}
	}
	\tagged{noshell}{
		\includegraphics[width=0.49\textwidth]{recall_kgraph10}
		\includegraphics[width=0.49\textwidth]{recall_kgraph50}
	}
\end{figure}

\paragraph{\ref{Q2}: Performance}

Consider the following scenario: an algorithm that processes data employs an ANN index. The quality of the algorithm's result (e.g., the classification rate) depends on the quality of the ANN index. However, the best parameters for the ANN algorithm are not known, and conclusions about the quality of the ANN index can only be drawn by looking at the algorithm's final result, which may be a long costly way to go. Does it pay out to run the property tester on the ANN index and recompute the index using different parameters if the tester rejects? We address this question by measuring the tester's performance. However, whether to use the tester or not depends heavily on the cost incurred otherwise. Therefore, we compare the property tester against the minimum cost that every algorithm that uses an ANN index must invest before it can employ it or even just draw conclusions about its quality: the build time of the index. \Cref{fig:performance} shows the time required by the property tester normalized (divided) by the time required to build the ANN index for each ANN algorithm and each dataset. There are two plots: one for all graphs that are between $0.005$ and $0.01$-close to a $10$-NN graph, and one for all graphs that are between $0.01$ and $0.02$-close to a $10$-NN graph.

In general, the running time of the property tester is always smaller than the build time for hnsw and SW-graph and at most five times the build time for KGraph. Mostly, it is even smaller than $\nicefrac{1}{10}$ of the build time, and therefore running the property tester comes at almost no additional cost. For the runs of the tester on KGraph indices with $k=50$, the testing time is also upper bounded by five times the build time and the tester time vs. build time ratio is $0.1$ for $k=10$ (restricted to MNIST and Fashion-MNIST) and $k=50$.

\begin{figure}
	\caption{Performance of the property tester in terms of property tester CPU time over ANN index building CPU time for all computed ANN indices that are $(0.005, 0.01]$-far and $(0.01, 0.02]$-far from being a $10$-NN graph, respectively. SW-graph computed only one graph that is $0.01$-close on SIFT.}
	\label{fig:performance}
	\untagged{noshell}{
		\includesvg[width=\textwidth]{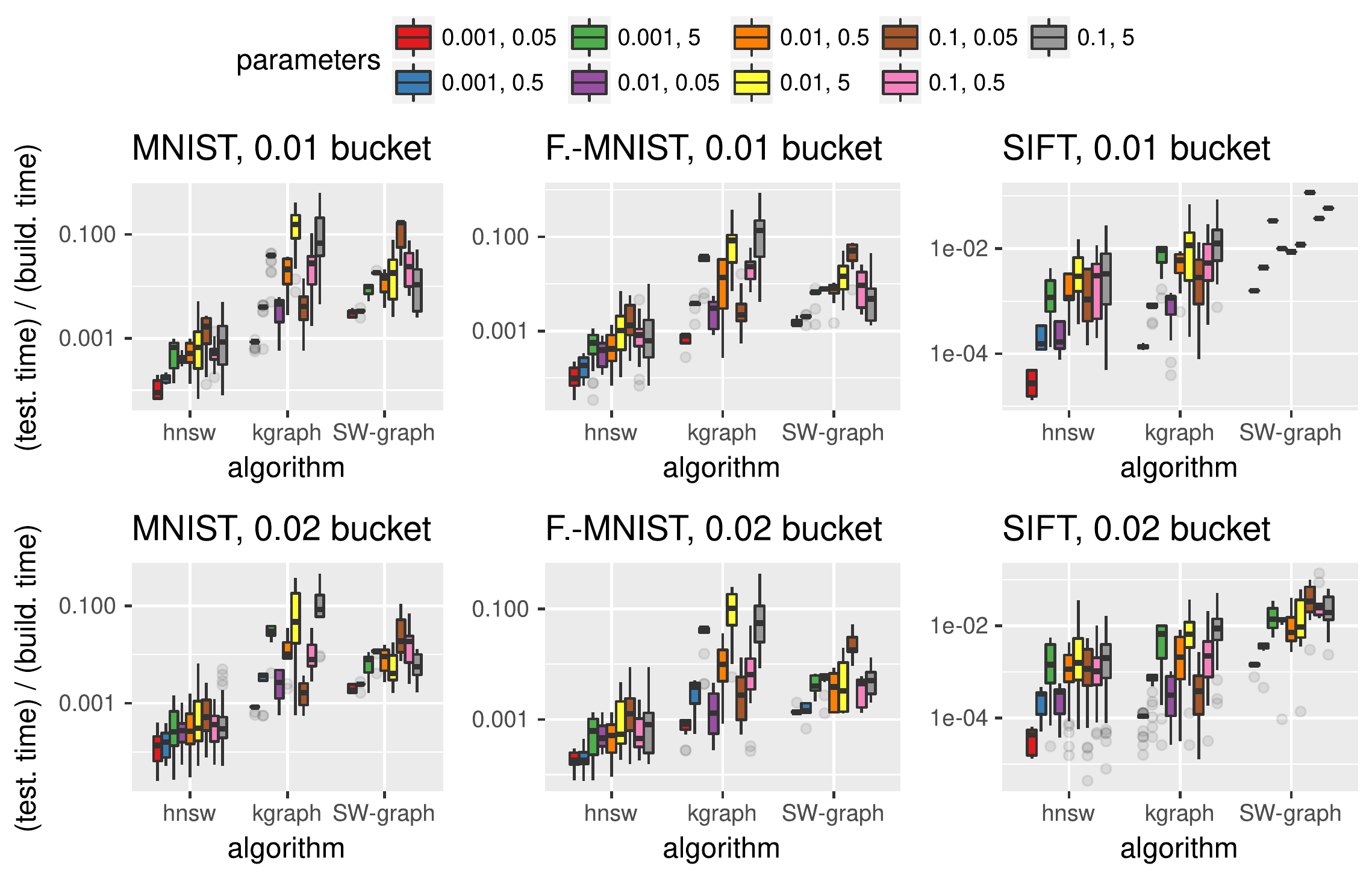}
	}
	\tagged{noshell}{
		\includegraphics[width=\textwidth]{performance}
	}
\end{figure}

\section{Conclusion}

We have studied the task of efficiently identifying NN models with low accuracy by exploring possibilities within the theoretical framework of sublinear algorithms and evaluated our approach by moving to experiments. In particular, we have proved that there is a one-sided error property tester with complexity $O(\sqrt{n} k^2 / \epsilon^2)$, i.e., a sublinear (randomized) algorithm that decides whether an input graph $G$ is a $k$-NN graph or requires many edge modifications to become a $k$-NN graph (i.e., precision $1$ and recall $2/3$ when taking $\epsilon$-far graphs as relevant). We also proved that even a two-sided error property tester requires complexity $\Omega(\sqrt{n/(\epsilon k)})$. Our experiments of the property tester on ANN indices computed by various algorithms indicate that testing comes at almost no additional cost, i.e., the testing time is significantly smaller than the building time of the ANN index that is tested.

From the perspective of applications, it would be desirable to analyze the tester for a more context sensitive notion of edit distance. For example, an edge to the $(k+1)$-nearest neighbor of a point instead of an edge to its $k$-nearest neighbor might be a defect that is much less severe than an edge to the $k^2$-nearest neighbor. It would be interesting to investigate what results can be obtained under established oracle access models, which are oblivious of the graph's structure, and whether other useful models can be devised.

\subsubsection*{Acknowledgments}

The research leading to these results has received funding from the European Research Council under the European Union's Seventh Framework Programme (FP7/2007-2013) / ERC grant agreement n$^\circ$~307696. We thank the anonymous reviewers for their comments and questions, which we addressed by adding \cref{thm:kreducing_tight} and \cref{thm:lower_bound_dimension} most notably.
	\tagged{arxiv}{
		\FloatBarrier
	}
	\bibliography{Paper}

\end{document}